\documentclass[11pt,a4paper]{article}
\usepackage[utf8]{inputenc}
\usepackage{multicol,amsmath,amsfonts,amssymb,amsthm,graphicx}
\usepackage[hmargin=2cm,vmargin=2cm]{geometry}
\usepackage{cite}
\usepackage[colorlinks=true,urlcolor=blue,citecolor=blue,anchorcolor=blue,linkcolor=blue]{hyperref}
\usepackage[linesnumbered,boxed]{algorithm2e}
\DontPrintSemicolon

\newtheorem{lm}{Lemma}

\newtheorem{crl}{Corollary}

\newtheorem{thm}{Theorem}
\usepackage{enumerate,etoolbox,paralist,url}
\usepackage{authblk}
\title{On the Theoretical Capacity of Evolution Strategies\\ to Statistically Learn the Landscape Hessian}
\author{Ofer M. Shir}
	\affil{School of Computer Science, Tel-Hai College, and	The Galilee Research Institute - Migal,	Upper Galilee, Israel\\
    \href{mailto:ofersh@telhai.ac.il}{ofersh@telhai.ac.il} }
\author{Jonathan Roslund}    
	\affil {Laboratoire Kastler Brossel, Universit\'{e} Pierre et Marie Curie, Paris, France\\
	\href{mailto:jroslund@lkb.upmc.fr}{jroslund@lkb.upmc.fr} }
\author{Amir Yehudayoff}
	\affil{Department of Mathematics, Technion - Israel Institute of Technology, Haifa, Israel\\
	\href{mailto:amir.yehudayoff@gmail.com}{amir.yehudayoff@gmail.com} }
\date{}
\setcounter{Maxaffil}{0}

\begin{document}
\maketitle
\begin{abstract}
We study the theoretical capacity to statistically learn local landscape information by Evolution Strategies (ESs). Specifically, we investigate the covariance matrix when constructed by ESs operating with the selection operator alone. We model continuous generation of candidate solutions about quadratic basins of attraction, with deterministic selection of the decision vectors that minimize the objective function values. Our goal is to rigorously show that accumulation of winning individuals carries the potential to reveal valuable information about the search landscape, e.g., as already practically utilized by derandomized ES variants. We first show that the statistically-constructed covariance matrix over such winning decision vectors shares the same eigenvectors with the Hessian matrix about the optimum. We then provide an analytic approximation of this covariance matrix for a non-elitist multi-child $(1,\lambda)$-strategy, which holds for a large population size $\lambda$. Finally, we also numerically corroborate our results.
\end{abstract}
\begin{center}
{\bf Keywords}: Theory of evolution strategies, statistical learning, covariance matrix adaptation, landscape Hessian, limit distributions of order statistics, extreme value distributions
\end{center}
\section{Introduction}
ESs \cite{Beyer-Schwefel}, popular heuristics that excel in global optimization of continuous search landscapes, utilize a Gaussian-based update (variation) step with an evolving covariance matrix.
Since the development of ESs, it has been assumed that this learned covariance matrix, which defines the variation operation, approximates the inverse Hessian of the search landscape. It was supported by the rationale that locating the global optimum by an ES can be accommodated using mutation steps that fit the actual landscape, or in other words, that the optimal covariance distribution can offer mutation steps whose \emph{equidensity probability contours} match the \emph{level sets} of the landscape (maximizing the progress rate at the same time) \cite{Rudolph92}. 

It has been additionally argued that reducing a general problem to an \emph{isotropic quadratic} problem may be achieved by sampling search-points based upon a covariance matrix that is the inverse Hessian (in equivalence to replacing the Euclidean distance measure with the Mahalanobis metric). Since ESs operate well on such isotropically quadratic problems, a successful ES run suggests that learning the covariance was accomplished. 
Nevertheless, it has never been formally proven that ESs' machinery can indeed learn the inverse of the Hessian. Rudolph \cite{Rudolph92} showed that ESs are capable of facilitating such learning and derived practical bounds on the population size toward the end of a successful learning period. That study paved the way toward accumulation of past search information by ESs by means of covariance matrices or any other forms of statistically learned algebraic structures. Especially, accumulation of selected individuals is practically utilized by derandomized ES variants \cite{Hansen01completely}, and it has led to the formulation of a successful family of search heuristics \cite{Baeck2013contemporary}.

The goal of the current study is to investigate the statistical learning potential of an accumulated set of selected individuals (the so-called \textit{winners} of each generation) \textbf{when the ES operates in the vicinity of a landscape optimum}. 
We argue and prove that accumulation of such \textit{winning} individuals carries the potential to reveal valuable search landscape information. In particular, we consider the statistically-constructed covariance matrix over \textit{winning} decision vectors and prove that it \textit{commutes} with the Hessian matrix about the optimum (i.e., the two matrices share the same eigenvectors, and therefore their level sets are positioned along the same axes). 
This result indicates that in learning a covariance, an ES deduces the sensitive directions for an effective optimization.
This carries the potential of a great benefit to an ES that learns a covariance matrix and may deduce the 
sensitive directions for effective optimization.
Furthermore, we provide an analytic approximation of the covariance matrix, which holds for a large population size and when the Hessian is well-behaved (formal details appear below).

The remainder of this paper is organized as follows.
The problem is formally stated in Section \ref{sec:problem}, where the assumed model is described in detail.
In Section \ref{sec:covariance} we formulate the covariance matrix, derive the necessary density function, and then prove that the covariance and the Hessian commute. Section \ref{sec:solving_singlewinner} provides an analytical covariance approximation for the problem of a $(1,\lambda)$-ES. A simulation study encompassing various landscape scenarios for $(1,\lambda)$ selection is presented in Section \ref{sec:simulation}, constituting a numerical corroboration for the theoretical outcomes in Sections \ref{sec:covariance} and \ref{sec:solving_singlewinner}. Finally, the results are discussed in Section \ref{sec:discussions}.

\section{Statistical Landscape Learning}\label{sec:problem}
We outline the \textit{research question} that we target:
\begin{quote}
What is the relation between the statistically-learned covariance matrix and the landscape Hessian if a single winner is selected in each iteration assuming generated samples that follow an isotropic Gaussian (no adaptation)?
%
\end{quote}
We focus on the \textit{a posteriori} statistical construction of the covariance matrix of the decision variables upon reaching the proximity of the optimum, when subject to ES operation.

In what follows, we formulate the problem, assume a model and present our notation.
\subsection{Problem Statement and Assumed Model}
Let ${J}: \mathbb{R}^n \to \mathbb{R}$ denote the objective function subject to minimization.
We assume that $J$ is minimized at the location $\vec{x}^{*}$,
which is assumed for simplicity to be the origin.
The objective function may be {\em Taylor-expanded} about the optimum.
We model the $n$-dimensional basin of attraction about $\vec{x}^{*}$ by means of a quadratic approximation.
We assume that this expansion is precise
\begin{equation}
\displaystyle 
J\left(\vec{x}-\vec{x}^{*}\right)\ =
J\left(\vec{x}\right)\ = \vec{x}^T \cdot \mathcal{H} \cdot \vec{x},
\end{equation}
with $\mathcal{H}$ being the landscape Hessian about the optimum.

The canonical non-elitist single-parent ES search process operates in the following manner:
The ES generates $\lambda$ search-points $\vec{x}_1,\ldots,\vec{x}_\lambda$
in each iteration, based upon Gaussian sampling with respect to the given search-point.
We are especially concerned with the canonical variation operator of ES, which adds a normally distributed \textit{mutation}
$\vec{z} \sim \mathcal{N} \left(\vec{0},\mathbf{I}\right)$.
That is, $\vec{x}_1,\ldots,\vec{x}_\lambda$ are independent and each is $\mathcal{N} \left(\vec{0},\mathbf{I}\right)$.
Following the evaluation of those $\lambda$ search points with respect to $J\left(\vec{x}\right)$, the best (minimal) individual is selected and recorded as
\begin{equation}
\vec{y} = \arg\min \{ J(\vec{x}_i)\}.
\end{equation}
Finally, let $\omega$ denote the \textit{winning} objective function value,
\begin{equation}
\omega = J(\vec{y}) = \min \left\{ J_{1},~ J_{2},~ \ldots ,~ J_{\lambda} \right\},
\end{equation}
where $J_i = J(\vec{x}_i)$.

We mention the difference between the optimization phase, which aims to arrive at the optimum and is not discussed here, to the statistical learning of the basin -- which lies in the focus of this study.

The sampling procedure is summarized as Algorithm 1, wherein \texttt{statCovariance} refers to a routine for \textit{statistically} constructing a covariance matrix from raw observations.
\IncMargin{1em}
\RestyleAlgo{boxed} 
\begin{algorithm}
\SetKwInOut{Output}{output}
\caption{Statistical sampling by $(1,\lambda)$-selection}
$t \leftarrow 0$\;
$\mathcal{S} \leftarrow$ $\emptyset$\;
\Repeat{$t \geq N_{\texttt{iter}}$} {
 \For{$k\leftarrow 1$ \KwTo $\lambda$}{
 $\vec{x}^{(t+1)}_{k} \leftarrow \vec{x}^{*} + \vec{z}_k,~~~\vec{z}_k \sim \mathcal{N} \left( \vec{0},\mathbf{I} \right)$\;
 $J^{(t+1)}_{k}\leftarrow$ \texttt{evaluate} $\left(\vec{x}^{(t+1)}_{k}\right)$\;
 } 
 $m_{t+1} \leftarrow\arg\min \left(\left\{J^{(t+1)}_{\imath}\right\}_{\imath=1}^{\lambda} \right)$\;
 $\mathcal{S} \leftarrow \mathcal{S} \cup \left\{ \vec{x}^{(t+1)}_{m_{t+1}} \right\} $\;
 $t \leftarrow t+1$\;
 }
\Output{$\mathcal{C}^{\texttt{stat}}=$\texttt{statCovariance}$\left(\mathcal{S}\right)$}
\label{algo:ES_sampling}
\end{algorithm}
\DecMargin{1em}

\subsection{Probability Functions of the ES Step}
The length of a mutation vector, $\sqrt{\vec{z}^T\vec{z}}$, obeys the so-called $\chi$-distribution with $n$ degrees of freedom.
Upon assuming a \textit{quadratic} basin of attraction, $\psi=J(\vec{z})$ is a random variable which obeys a generalized $\chi^2$-distribution. We consider two cases:
\begin{compactenum}
\item The basic, simplified case of an isotropic basin, that is, its Hessian matrix constitutes the identity: $\mathcal{H}=\mathbf{I}$. 
In this case, the distribution of $\psi$ is the standard $\chi^2$-distribution, possessing the following cumulative distribution function (\textit{CDF}) accounting for the search-space dimensionality $n$:
\begin{equation}\label{eq:cdf_chi}
\displaystyle F_{\chi^2}\left(\psi \right) = \frac{1}{2^{n/2}\Gamma\left(n/2\right)} \int_0^{\psi}t^{\frac{n}{2}-1}\exp\left(-\frac{t}{2} \right) ~ \textrm{d}t
\end{equation}
with $\Gamma(t)$ being the Gamma function, defined by: $$\Gamma(t)=\int_{0}^{\infty}x^{t-1}\exp(-x)\textrm{d}x.$$
The probability density function (\textit{PDF}) is given by:
\begin{equation}\label{eq:pdf_chi}
\displaystyle f_{\chi^2}\left(\psi \right) = \frac{1}{2^{n/2} \Gamma\left(n/2\right)} \psi^{n/2-1}\exp\left(-\frac{\psi}{2} \right).
\end{equation}

\item The generalized case of a globally minimal quadratic basin, where the Hessian matrix is positive definite with the following eigendecomposition form,
$$\mathcal{H}=\mathcal{U}\mathcal{D}\mathcal{U}^{-1}~~~~~\mathcal{D}=\textrm{diag}\left[\Delta_1,\ldots,\Delta_n \right],$$
with $\left\{\Delta_i \right\}_{i=1}^{n}$ being the eigenvalues.
The random variable $\psi= J(\vec{z})$ now obeys a generalized $\chi^2$-distribution, whose \textit{exact distribution function} is described as follows \cite{FD68_NASA}:
\begin{equation}
\begin{array}{r}
\medskip
\displaystyle F_{\mathcal{H}\chi^2}(\psi) = \int_0^{\infty} \frac{2}{\pi} \frac{\sin\frac{t\psi}{2}}{t} \cos \left(-t\psi + \frac{1}{2} \sum_{j=1}^{n}\tan^{-1}2\Delta_j t \right) \\
\displaystyle \times \prod_{j=1}^{n}\left(1+\Delta_j^2 t^2 \right)^{-\frac{1}{4}} ~ \textrm{d}t,
\end{array}
\end{equation}
with an unknown closed form. At the same time, this CDF is known to follow an \textit{approximation} \cite{FD68_NASA}, 
\begin{equation}\label{eq:Nchi}
\displaystyle F_{\mathcal{T}\chi^2}\left(\psi \right) = \frac{\Upsilon^{\eta}}{\Gamma\left(\eta\right)} \int_0^{\psi}t^{\eta-1}\exp\left(-\Upsilon t \right) ~ \textrm{d}t,
\end{equation}
with $\Upsilon$ and $\eta$ accounting for matching the first two moments of $\vec{z}^{T}\mathcal{H}\vec{z}$ (and the subscript $\mathcal{T}$ marks the transformed distribution):
\begin{equation}
\Upsilon=\frac{1}{2}\frac{\sum_{i=1}^{n}\Delta_i}{\sum_{i=1}^{n}\Delta_i^2},~~~\eta=\frac{1}{2}\frac{\left(\sum_{i=1}^{n}\Delta_i\right)^2}{\sum_{i=1}^{n}\Delta_i^2}.
\end{equation}
The density function of this approximation reads:
\begin{equation}\label{eq:Nchi_density}
\displaystyle f_{\mathcal{T}\chi^2}\left(\psi \right) = \frac{\Upsilon^{\eta}}{\Gamma\left(\eta\right)} \psi^{\eta-1}\exp\left(-\Upsilon \psi \right).
\end{equation}
\textbf{The accuracy of this approximation depends upon the standard deviation of the eigenvalues \cite{FD68_NASA}, which is clearly related to the so-called condition number}. We assume that standard deviation to be moderate, and we thus adopt this approximation herein.
For the isotropic case, it can be easily verified that Eq.\ \ref{eq:Nchi_density} reduces to Eq.\ \ref{eq:pdf_chi}.
\end{compactenum}

\medskip

We conclude this section, by \textbf{summarizing the relevant notation}:
The random vector $\vec{z}$ is a normal Gaussian mutation and $\psi=J(\vec{z})$.
The random vectors $\vec{x}_1,\ldots,\vec{x}_\lambda$ are $\lambda$ independent copies of $\vec{z}$, and $J_i=J(\vec{x}_i)$.
The winner is $\vec{y}$, and $\omega = J(\vec{y})$. 
The matrix $\mathcal{H}$ is the Hessian about the optimum $\vec{x}^{*}$, and $\mathcal{C}$ is the covariance matrix of $\vec{y}$.

\section{Covariance Matrix Formulation} \label{sec:covariance}
In the current section we formulate the covariance matrix by means of its defining density functions and then prove it commutes with the landscape Hessian.

By construction, the origin is set at the parent search-point, which is located at the optimum.
Analytically, the covariance elements are thus reduced to the following \emph{expectation values}:  
\begin{equation}
\label{eq:Cov0}
\boxed{
\displaystyle \mathcal{C}_{ij} = \int x_i x_j \texttt{PDF}_{\vec{y}}\left( \vec{x}\right) \textrm{d}\vec{x}},
\end{equation}
where $\texttt{PDF}_{\vec{y}}\left( \vec{x}\right)$ is an $n$-dimensional density function characterizing the \textit{winning} decision variables about the optimum. 
{\bf In essence, the current study aims at understanding this expression in Eq.\ \ref{eq:Cov0}. 
To this end, revealing the nature of $\texttt{PDF}_{\vec{y}}$ is necessary for the interpretation of the covariance matrix}.
Importantly, the selection mechanism of the heuristic is blind to the location of the candidate solutions in the search space, and its sole criterion is the ranked function values. 
Specifically, in the \textit{decision-space perspective}, the density function of a \textit{winning} vector of decision variables $\vec{y}$ is related to the density of the \textit{winning} function value $\omega$ via the following relation:
\begin{equation}\label{eq:x_pdf_2}
\boxed{
\displaystyle \texttt{PDF}_{\vec{y}}\left(\vec{x}\right) =   \texttt{PDF}_{\omega}\left( J\left(\vec{x} \right) \right) \cdot \frac{\texttt{PDF}_{\vec{z}}\left(\vec{x}\right)}{\texttt{PDF}_{\psi}\left( J\left(\vec{x} \right)\right)} }
\end{equation}
with $\texttt{PDF}_{\vec{z}}$ denoting the density function for generating an individual, and $\texttt{PDF}_{\psi}$ denoting the density function of the objective function values (Eqs.\ \ref{eq:pdf_chi} and \ref{eq:Nchi_density}).
A brief justification follows. 
The density functions satisfy the conditional probability relation:
\begin{equation}
\texttt{PDF}_{\vec{y}}\left(\vec{x}\right) = \texttt{PDF}_{\omega}\left( J\left(\vec{x} \right) \right) \cdot \texttt{PDF}_{\vec{y}\mid \omega}\left( \vec{x} \mid J\left(\vec{x} \right) \right).
\end{equation}
Now consider the distribution of $\left[ \vec{y} ; \omega \right]$ on $\mathbb{R}^{n+1}$.
The density of $\vec{y}$ conditioned on the value of $J\left( \vec{y} \right)$ is that of a normal Gaussian subject to this conditioning, since we may sample $\left[ \vec{y} ; \omega \right]$ by the following construction: 
First sample $\left\{ J_{1},\ldots ,J_{\lambda} \right\}$ according to $\texttt{PDF}_{\psi}$ independently.
Then sample $\left\{ \vec{x}_{1},\ldots,\vec{x}_{\lambda} \right\}$ conditioned on the values of $J_{1},\ldots,J_{\lambda}$ independently.
Finally, $J$ may be set to $J_{\ell}=\omega$ that is minimal and $\vec{y}$ set to the respective $\vec{x}_{\ell}$. 
In other words, following selection, a winning value of $J$ is chosen to be $\omega$, and the corresponding $\vec{x}$ becomes the winning vector $\vec{y}$. 
Importantly, the winning vector $\vec{y}$ conditioned upon the winning value $\omega$ is generated in the same manner as a normally-distributed $\vec{z}$ conditioned upon $\psi$. 
As a result, the conditional probability for the generation of $\vec{y}$ conditioned upon $\omega$ is the same as that for the creation of $\vec{z}$ conditioned upon $\psi$, i.e., $\texttt{PDF}_{\vec{y}\mid \omega}=\texttt{PDF}_{\vec{z}\mid \psi}$.
This density therefore reads:
\begin{equation}
\texttt{PDF}_{\vec{y}\mid \omega}\left( \vec{x} \mid J\left(\vec{x} \right) \right) = 
\texttt{PDF}_{\vec{z}\mid \psi}\left( \vec{x} \mid J\left(\vec{x} \right) \right) = 
 \frac{\texttt{PDF}_{\vec{z}}\left(\vec{x}\right)}{\texttt{PDF}_{\psi}\left( J\left(\vec{x} \right)\right)}.
\end{equation}

\begin{thm} \label{thm:commuting}
The covariance matrix and the Hessian are \textbf{commuting matrices} when the objective function follows the quadratic approximation.
\end{thm}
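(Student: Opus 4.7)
The plan is to exploit a rotational/reflection symmetry of the density $\texttt{PDF}_{\vec{y}}$ that is inherited directly from the factorization derived just before the theorem. Since $\mathcal{H}$ is symmetric and positive definite, I would diagonalize it as $\mathcal{H} = \mathcal{U}\mathcal{D}\mathcal{U}^{T}$ with $\mathcal{U}$ orthogonal and $\mathcal{D} = \textrm{diag}[\Delta_1,\ldots,\Delta_n]$, and make the orthogonal change of variables $\vec{u} = \mathcal{U}^{T}\vec{x}$. In these coordinates the quadratic form becomes $J(\vec{x}) = \vec{u}^{T}\mathcal{D}\vec{u} = \sum_i \Delta_i u_i^{2}$, while the isotropic mutation density $\texttt{PDF}_{\vec{z}}$ depends on $\vec{x}$ only through $\|\vec{x}\|^{2} = \|\vec{u}\|^{2}$.

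Next, I would observe that the three ingredients of the displayed identity $\texttt{PDF}_{\vec{y}}(\vec{x}) = \texttt{PDF}_{\omega}(J(\vec{x}))\,\texttt{PDF}_{\vec{z}}(\vec{x})/\texttt{PDF}_{\psi}(J(\vec{x}))$ all depend on $\vec{x}$ either through $J(\vec{x})$ or through $\|\vec{x}\|^{2}$. Consequently, after the change of variables, the density of $\vec{y}$ is a function solely of the two quantities $\sum_k u_k^{2}$ and $\sum_k \Delta_k u_k^{2}$. Both of these are invariant under each single-coordinate reflection $u_i \mapsto -u_i$, so the density is invariant under this reflection as well.

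The key step is then a standard odd-integrand argument. For any pair $i \neq j$, the function $u_i u_j$ is odd under the reflection $u_i \mapsto -u_i$ while the density is even; hence the integral of $u_i u_j$ against the density over $\mathbb{R}^{n}$ vanishes. Writing $\mathcal{C}^{(u)}$ for the covariance of $\vec{y}$ expressed in the $\vec{u}$-coordinates, this means $\mathcal{C}^{(u)}$ is diagonal. Since $\mathcal{C}^{(u)} = \mathcal{U}^{T}\mathcal{C}\mathcal{U}$ and $\mathcal{D} = \mathcal{U}^{T}\mathcal{H}\mathcal{U}$ is also diagonal, the two matrices are simultaneously diagonalized by $\mathcal{U}$, which immediately gives $\mathcal{C}\mathcal{H} = \mathcal{H}\mathcal{C}$.

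I do not anticipate a serious obstacle: the argument is purely symmetry-based and needs none of the explicit forms of $\texttt{PDF}_{\omega}$ or $\texttt{PDF}_{\psi}$. The only point to state carefully is the dependence structure of the density — that $\texttt{PDF}_{\vec{z}}$ is rotationally invariant and $J$ is the quadratic form associated with $\mathcal{H}$ — because that is precisely what reduces the full isotropy/quadratic structure of the problem to the two invariants $\|\vec{u}\|^{2}$ and $\vec{u}^{T}\mathcal{D}\vec{u}$ needed to kill the off-diagonal entries.
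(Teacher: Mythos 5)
Your proof is correct and follows essentially the same route as the paper's: an orthogonal change of variables $\vec{u}=\mathcal{U}^{T}\vec{x}$ that diagonalizes $\mathcal{H}$, followed by the observation that the transformed density depends only on the even invariants $\|\vec{u}\|^{2}$ and $\vec{u}^{T}\mathcal{D}\vec{u}$, so the off-diagonal second moments vanish by an odd-integrand argument and $\mathcal{C}$, $\mathcal{H}$ are simultaneously diagonalized. Your explicit use of the single-coordinate reflection $u_i\mapsto -u_i$ is simply a cleaner statement of the ``symmetry considerations'' the paper invokes.
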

\begin{proof}
Given the density function in Eq.\ \ref{eq:x_pdf_2}, the objective function is assumed to satisfy $J\left( \vec{x}\right) = \vec{x}^T \cdot \mathcal{H} \cdot \vec{x}$, and the covariance matrix reads:
\begin{equation}
\label{eq:Cov1}
\displaystyle \mathcal{C}_{ij} = \int x_i x_j \texttt{PDF}_{\omega}\left( \vec{x}^T \cdot \mathcal{H} \cdot \vec{x} \right) \cdot \frac{\texttt{PDF}_{\vec{z}}\left(\vec{x}\right)}{\texttt{PDF}_{\psi}\left( \vec{x}^T \cdot \mathcal{H} \cdot \vec{x} \right)} \textrm{d}\vec{x}.
\end{equation}
Consider the orthogonal matrix $\mathcal{U}$, which diagonalizes $\mathcal{H}$ into $\mathcal{D}$ and possesses a determinant of value $1$:  
\begin{equation*}
\begin{array}{l}
\medskip
\displaystyle \mathcal{U}^{-1} \mathcal{H} \mathcal{U} = \mathcal{D} \equiv \textrm{diag}\left[\Delta_1,\Delta_2,\ldots,\Delta_n\right]\\
\medskip
\displaystyle \vec{\vartheta} = \mathcal{U}^{-1} \vec{x}\\
\displaystyle \textrm{d}\vec{\vartheta} = \textrm{d}\vec{x}
\end{array}.
\end{equation*}
We target the integral $\mathcal{I}_{ij}=\left( \mathcal{U}^{-1} \mathcal{C} \mathcal{U}\right)_{ij}$ and apply a change of variables into $\vec{\vartheta}$ (after changing order of summations):
\begin{equation}
\begin{array}{r}
\medskip
\displaystyle \mathcal{I}_{ij} = \frac{1}{\sqrt{\left(2\pi \right)^n}} 
\int_{-\infty}^{+\infty}\int_{-\infty}^{+\infty}\cdots \int_{-\infty}^{+\infty} 
\vartheta_i \vartheta_j  \exp \left(-\frac{1}{2} \vec{\vartheta}^{T}\vec{\vartheta} \right) \times \\
\displaystyle \times \frac{\texttt{PDF}_{\omega}\left( \vec{\vartheta}^T \cdot \mathcal{D} \cdot \vec{\vartheta} \right)}{\texttt{PDF}_{\psi}\left( \vec{\vartheta}^T \cdot \mathcal{D} \cdot \vec{\vartheta} \right)}
\textrm{d}\vartheta_1\textrm{d}\vartheta_2\cdots \textrm{d}\vartheta_n.
\end{array}
\end{equation}
$\mathcal{I}_{ij}$ vanishes for any $i\neq j$ due to symmetry considerations: 
the overall integrand is an \textit{odd} function, because all the terms are \textit{even} functions, except for $\vartheta_j$, $\vartheta_i$ when they differ. Therefore, the integration over the entire domain yields zero.
Hence, $\mathcal{I}$ is the diagonalized form of $\mathcal{C}$, with $\mathcal{U}$ holding the eigenvectors.
$\mathcal{C}$ is thus diagonalized by the same eigenvectors as $\mathcal{H}$, and therefore, by definition, they are commuting matrices, as claimed.
\end{proof}

\section{Analytic Approximation} \label{sec:solving_singlewinner}
In this section we provide an approximation for $\texttt{PDF}_{\omega}\left(J\left(\vec{x}\right)\right)$ and consequently for $\texttt{PDF}_{\vec{y}}\left(\vec{x}\right)$ in order to explicitly calculate the covariance matrix using Eq.\ \ref{eq:Cov0}.

A non-elitist multi-child selection is considered here, where in each iteration a single individual is deterministically selected out of $\lambda$ generated offspring. In particular, consider a random sample from an absolutely continuous population with \textit{density} $\texttt{PDF}_{\psi}\left(\psi\right)$ and \textit{distribution} $\texttt{CDF}_{\psi}\left(\psi\right)$.
In order to formulate the \textit{density} of those \textit{winners}, it is convenient to first characterize the distribution function of the winning event amongst $\lambda$ candidates\footnote{Gupta \cite{Gupta1960} showed that when the dimension $n$ is \textit{even}, the distribution of the winners for the $\chi^2$ distribution (isotropic basin case) possesses a simple form:
\begin{equation}\label{eq:y_cdf_even}
\displaystyle \texttt{CDF}_{\omega}^{(n=2m)}\left(\psi\right) =   1 - \exp\left(-\lambda\frac{\psi}{2} \right) \left(\sum_{\jmath=0}^{\frac{n}{2}-1}\frac{\psi^{\jmath}}{\jmath!} \right)^{\lambda}
\end{equation}.}:
\begin{equation}\label{eq:y_cdf}
\displaystyle \texttt{CDF}_{\omega}\left(\psi\right) =  \texttt{Pr}\left\{ \omega \leq \psi \right\} = 1 - \left( 1-\texttt{CDF}_{\psi}\left( \psi \right)\right)^{\lambda}.
\end{equation}
The density function is obtained upon differentiating Eq.\ \ref{eq:y_cdf}:
\begin{equation}\label{eq:y_pdf}
\displaystyle \texttt{PDF}_{\omega}\left( \psi \right) =   \lambda \cdot \left( 1-\texttt{CDF}_{\psi}\left(\psi \right)\right)^{\lambda-1}\cdot \texttt{PDF}_{\psi}\left(\psi\right).
\end{equation}
Upon substituting the explicit forms into $\texttt{CDF}_{\psi}$ and $\texttt{PDF}_{\psi}$ (using either Eqs.\ (\ref{eq:cdf_chi},\ref{eq:pdf_chi}) for the \textit{standard} $\chi^2$ or Eqs.\ (\ref{eq:Nchi},\ref{eq:Nchi_density}) for the \textit{generalized} $\chi^2$), the desired density function $\texttt{PDF}_{\omega}\left(J\left(\vec{x}\right)\right)$ is obtained, however not in a closed form.\\

Gupta \cite{Gupta1960} derived explicit order statistic results from the Gamma distribution, to which the $\chi^2$ distribution belongs, including the distribution function as well as moments of the $k^{th}$ order statistic. 
Such results could reveal a closed form for $\texttt{PDF}_{\omega}\left(J\left(\vec{x}\right)\right)$, which seems cumbersome and far too complex to address when targeting Eq.\ \ref{eq:Cov0}.
Next, we will seek an \textit{approximation} for $\texttt{PDF}_{\omega}\left(J\left(\vec{x}\right)\right)$, which will enable us to realize the relation of Eq.\ \ref{eq:x_pdf_2} when large values of $\lambda$ are assumed.

\subsection{Limit Distributions of Order Statistics}
We treat the derived winners' distribution for large sample sizes, i.e., when the population size $\lambda$ tends to infinity.
We denote the CDF in Eq.\ \ref{eq:y_cdf} with a subscript $\lambda$, 
$\mathcal{L}_{\lambda}\left(\psi\right) = 1 - \left( 1-\texttt{CDF}_{\psi}\left(\psi\right)\right)^{\lambda}$,
and consider the limit when $\lambda$ tends to infinity:
\begin{equation*}
\displaystyle \lim_{\lambda\longrightarrow \infty} \mathcal{L}_{\lambda}\left(\psi\right) = 
\left\{\begin{array}{l}
0~~~~~\textrm{if}~\texttt{CDF}_{\psi}\left(\psi\right)=0\\
1~~~~~\textrm{if}~\texttt{CDF}_{\psi}\left(\psi\right)>0
\end{array} \right.
\end{equation*}
According to the Fisher-Tippett theorem \cite{FisherTippett1928}, also known as the \textit{extremal types theorem}, the \textit{von-Mises} family of distributions for \textbf{minima} (or the minimal generalized extreme value distributions ($\textrm{GEVD}_{\min}$)) are the only non-degenerate family of distributions satisfying this limit. 
They are characterized as a unified family of distributions by the following CDF:
\begin{equation} \label{eq:minimaGEVD}
\mathcal{L}_{\kappa}\left( \psi;\kappa_1,\kappa_2,\kappa_3 \right) = 1-\exp \left\{- \left[1+ \kappa_3\left(\frac{\psi-\kappa_1}{\kappa_2}\right)\right]^{1/\kappa_3} \right\}.
\end{equation}
Furthermore, since the minimum distribution moves toward the origin as $\lambda$ increases, normalizing constants are needed to avoid degeneracy and to obtain:
\begin{equation}\label{eq:limitMin}
\begin{array}{l}
\displaystyle \lim_{\lambda\longrightarrow \infty} \mathcal{L}_{\lambda}\left(a^{*}_{\lambda}\psi+b^{*}_{\lambda}\right) =\\ 
\displaystyle \lim_{\lambda\longrightarrow \infty} 1 - \left( 1-\texttt{CDF}_{\psi}\left(a^{*}_{\lambda}\psi+b^{*}_{\lambda}\right)\right)^{\lambda}=\mathcal{L}\left(\psi\right)~~~\forall \psi
\end{array}
\end{equation}
The location parameter, $\kappa_1$, and the scale parameter, $\kappa_2$, are obviously interlinked to the aforementioned normalizing constants. The shape parameter, $\kappa_3$, determines the identity of the characteristic CDF, namely either Weibull, Gumbell, or Frech{\'e}t. This parameter is evaluated by means of the following limit, whose existence is a necessary and sufficient condition for a continuous distribution function $\texttt{CDF}_{\psi}\left(\psi\right)$ to belong to the domain of attraction for minima of $\mathcal{L}_{\kappa}\left( \psi\right)$ :
\begin{equation}
\label{eq:kappa_def}
\displaystyle \kappa_3 = \lim_{\varepsilon\longrightarrow 0} -\log_{2} \frac{\texttt{CDF}_{\psi}^{-1}\left(\varepsilon \right)-\texttt{CDF}_{\psi}^{-1}\left(2\varepsilon \right)}{\texttt{CDF}_{\psi}^{-1}\left(2\varepsilon \right)-\texttt{CDF}_{\psi}^{-1}\left(4\varepsilon \right)},
\end{equation}
where $\texttt{CDF}_{\psi}^{-1}$ refers to the inverse CDF (the quantile function of $\texttt{CDF}_{\psi}\left(\psi\right)$; see Theorem 9.6 in \cite{Castillo2004} [pp. 204-205]):
\begin{itemize}
\item If $\kappa_3 > 0$, $\texttt{CDF}_{\psi}\left(\psi\right)$ belongs to the Weibull minimal domain of attraction,
\item if $\kappa_3 = 0$, $\texttt{CDF}_{\psi}\left(\psi\right)$ belongs to the Gumbel minimal domain of attraction, and
\item if $\kappa_3 < 0$, $\texttt{CDF}_{\psi}\left(\psi\right)$ belongs to the Frech{\'e}t minimal domain of attraction.
\end{itemize}

Note that Rudolph had already taken a related mathematical approach, which he termed \textit{asymptotic theory of extreme order statistics}, to characterize convergence properties of ESs on a class of convex objective functions \cite{Rudolph97}.
Also, GEVD is introduced to the broad perspective of Stochastic Global Optimization in \cite{zhigljavsky2007stochastic}, a book which also constitutes a proper mathematical reference for this topic, yet in a slightly different light. 

\begin{lm}
For the standard $\chi^2$ distribution (i.e., isotropic basin), the limit for Eq.\ \ref{eq:kappa_def} exists and reads $\kappa_3=2/n$.
\end{lm}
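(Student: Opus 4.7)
The plan is to exploit the near-origin behavior of $F_{\chi^2}$. Since $\texttt{CDF}_{\psi}^{-1}(\varepsilon) \to 0$ as $\varepsilon \to 0^{+}$, the entire evaluation of $\kappa_3$ reduces to an asymptotic expansion of $F_{\chi^2}$ at the origin. First, I would expand the integrand of Eq.\ \ref{eq:cdf_chi} using $e^{-t/2} = 1 + O(t)$ and integrate term-by-term to get
\begin{equation*}
F_{\chi^2}(\psi) \;=\; \frac{2}{n\cdot 2^{n/2}\,\Gamma(n/2)}\,\psi^{n/2}\,\bigl(1 + O(\psi)\bigr)
\end{equation*}
for small $\psi > 0$. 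Inverting this leading-order relation yields
\begin{equation*}
F_{\chi^2}^{-1}(\varepsilon) \;=\; K\,\varepsilon^{2/n}\,\bigl(1 + o(1)\bigr), \qquad K := \bigl(n\cdot 2^{n/2-1}\,\Gamma(n/2)\bigr)^{2/n},
\end{equation*}
as $\varepsilon \to 0^{+}$, where the monotonicity and smoothness of $F_{\chi^2}$ near $0$ justify the inversion.

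Next, I would substitute this expansion into the three quantiles appearing in Eq.\ \ref{eq:kappa_def}. For each constant $c \in \{1,2,4\}$ one has $F_{\chi^2}^{-1}(c\varepsilon) = K\,c^{2/n}\varepsilon^{2/n} + o(\varepsilon^{2/n})$, so the numerator of the ratio in Eq.\ \ref{eq:kappa_def} equals $K\,\varepsilon^{2/n}\,(1 - 2^{2/n}) + o(\varepsilon^{2/n})$ and the denominator equals $K\cdot 2^{2/n}\,\varepsilon^{2/n}\,(1 - 2^{2/n}) + o(\varepsilon^{2/n})$. Dividing, the ratio tends to $2^{-2/n}$, and hence $\kappa_3 = -\log_2 2^{-2/n} = 2/n$, as claimed.

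The main technical point is to ensure that the $o(\varepsilon^{2/n})$ remainders do not spoil the cancellation. Because the leading coefficient $1 - 2^{2/n}$ is a non-zero constant, and the remainders retain their $o(\varepsilon^{2/n})$ order under the scaling $\varepsilon \mapsto c\varepsilon$ for any fixed $c$, no finer inversion of $F_{\chi^2}$ is required beyond its first-order behavior; an invocation of Puiseux inversion of the analytic expansion around $\psi = 0$ is available as a safety net but is not strictly needed. The algebraic power-law exponent $n/2$ of the CDF at zero is exactly what fixes the shape parameter at $2/n$, while the $\Gamma$ prefactor cancels in the ratio. As a sanity check, $\kappa_3 = 2/n > 0$ places the isotropic case in the Weibull minimal domain of attraction, consistent with the bounded-below nature of $\psi = \vec{z}^{T}\vec{z} \geq 0$.
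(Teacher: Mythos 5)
Your proposal is correct and follows essentially the same route as the paper: expand $F_{\chi^2}$ at the origin to extract the leading power law $\psi^{n/2}$, invert to get $F_{\chi^2}^{-1}(\varepsilon)\sim K\varepsilon^{2/n}$, and observe that the constant cancels in the ratio of Eq.\ \ref{eq:kappa_def}, giving $2^{-2/n}$ and hence $\kappa_3=2/n$. Your version is in fact slightly cleaner, since you keep the exact constant $K$ and track the $o(\varepsilon^{2/n})$ remainders explicitly, whereas the paper additionally (and unnecessarily) applies Stirling's formula to approximate the prefactor as $(e/4n)^{n/2}$ before it cancels anyway.
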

\begin{proof}
The limit needs to be evaluated about $\varepsilon\longrightarrow 0$. 
By inserting an asymptotic expansion of the Gamma function's integrand, the overall CDF $F_{\chi^2}$ (Eq. \ref{eq:cdf_chi}) may be written and approximated using Stirling's formula as 
\begin{equation*}
\begin{array}{l}
\displaystyle F_{\chi^2}\left(\varepsilon \right) = \frac{1}{2^{n/2}\Gamma\left(n/2\right)}\varepsilon^{\frac{n}{2}}\sum_{k=0}^{\infty}\frac{\left(-1 \right)^k \left(\frac{\varepsilon}{2}\right)^{k}}{\left( \frac{n}{2}+k\right) k!} \\
\displaystyle \approx  \frac{2}{n\cdot 2^{n/2}\Gamma\left(n/2\right)} \varepsilon^{\frac{n}{2}} \approx \left(\frac{\varepsilon}{4}\frac{e}{n} \right)^{\frac{n}{2}}
\end{array},
\end{equation*}
taking only the zeroth-order term in the sum into consideration. The quantile (inverse) function has the form: 
\begin{equation}
\displaystyle F_{\chi^2}^{-1}\left(\varepsilon \right) \approx \frac{4n}{e}\cdot \varepsilon^{\frac{2}{n}}.
\end{equation}
Targeting the limit in Eq.\ \ref{eq:kappa_def} yields
\begin{equation}
\displaystyle \frac{F_{\chi^2}^{-1}\left((\varepsilon \right)-F_{\chi^2}^{-1}\left(2\varepsilon \right)}{F_{\chi^2}^{-1}\left(2\varepsilon \right)-F_{\chi^2}^{-1}\left(4\varepsilon \right)} \approx \frac{\varepsilon^{\frac{2}{n}}\left(2^{\frac{2}{n}} -1 \right)}{\varepsilon^{\frac{2}{n}}\left(4^{\frac{2}{n}} - 2^{\frac{2}{n}} \right)} = \frac{1}{2^{\frac{2}{n}}},
\end{equation}
which allows to conclude with:
\begin{equation}
\label{eq:limit_kappa}
\displaystyle \kappa_3 = \lim_{\varepsilon\longrightarrow 0} -\log_{2} \frac{F_{\chi^2}^{-1}\left(\varepsilon \right)-F_{\chi^2}^{-1}\left(2\varepsilon \right)}{F_{\chi^2}^{-1}\left(2\varepsilon \right)-F_{\chi^2}^{-1}\left(4\varepsilon \right)} = \frac{2}{n}.
\end{equation}
 \end{proof}
\begin{lm}
For the generalized $\chi^2$ distribution (i.e., non-isotropic basin), the limit for Eq.\ \ref{eq:kappa_def} exists and also reads $\kappa_3=2/n$.
\end{lm}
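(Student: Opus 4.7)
The plan is to mirror the structure of the proof of Lemma~1, since $\kappa_3$ is a purely tail quantity determined by the leading-order behavior of the CDF as $\varepsilon\to 0^+$. The difficulty is that $F_{\mathcal{H}\chi^2}$ has no closed form, and the bulk approximation $F_{\mathcal{T}\chi^2}$ from Eq.~\ref{eq:Nchi} — being Gamma with shape $\eta$ — would, if applied naively, produce $\kappa_3 = 1/\eta \neq 2/n$ in general. I would therefore bypass the moment-matched approximation and instead extract the near-zero asymptotic of $F_{\mathcal{H}\chi^2}$ directly from its probabilistic definition, via a geometric computation of the Gaussian mass of a small ellipsoid.

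Concretely, I would write $F_{\mathcal{H}\chi^2}(\varepsilon) = \Pr\{\vec{z}^T \mathcal{H}\vec{z}\leq\varepsilon\}$ with $\vec{z}\sim\mathcal{N}(\vec{0},\mathbf{I})$, diagonalize $\mathcal{H}=\mathcal{U}\mathcal{D}\mathcal{U}^{-1}$, and use the orthogonal substitution $\vec{z}=\mathcal{U}\vec{y}$ (already exploited in the proof of Theorem~\ref{thm:commuting}), which preserves both the Gaussian law and the Lebesgue measure and reduces the event to $\sum_{i=1}^{n}\Delta_i y_i^2\leq\varepsilon$. A further anisotropic rescaling $y_i = u_i/\sqrt{\Delta_i}$ maps the ellipsoid to the Euclidean ball $B_\varepsilon=\{\vec{u}:\|\vec{u}\|^2\leq\varepsilon\}$ with Jacobian $1/\sqrt{\det\mathcal{H}}$, while the Gaussian density becomes $\exp(-\tfrac{1}{2}\sum_i u_i^2/\Delta_i)$. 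Since $\sum_i u_i^2/\Delta_i \leq \varepsilon/\Delta_{\min}$ on $B_\varepsilon$, this weight equals $1+O(\varepsilon)$ uniformly, and multiplying by $\mathrm{vol}(B_\varepsilon)=\pi^{n/2}\varepsilon^{n/2}/\Gamma(n/2+1)$ gives
\[
F_{\mathcal{H}\chi^2}(\varepsilon) = K\,\varepsilon^{n/2}\bigl(1+O(\varepsilon)\bigr), \qquad K=\frac{1}{2^{n/2}\,\Gamma(n/2+1)\sqrt{\det\mathcal{H}}}.
\]
Asymptotic inversion then yields $F_{\mathcal{H}\chi^2}^{-1}(\varepsilon)=K^{-2/n}\,\varepsilon^{2/n}(1+o(1))$, i.e., the very same $\varepsilon^{2/n}$ scaling as in the isotropic case, only with a spectrum-dependent prefactor.

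Substituting this asymptotic into Eq.~\ref{eq:kappa_def}, the prefactor $K^{-2/n}$ cancels between numerator and denominator and the calculation collapses to the one already performed in Lemma~1, delivering the limit $2^{-2/n}$ and hence $\kappa_3 = 2/n$. The main obstacle is the careful propagation of the $o(1)$ remainders through the ratio of differences in Eq.~\ref{eq:kappa_def}: within each difference the leading $\varepsilon^{2/n}$ terms largely cancel to produce a surviving factor proportional to $(1-2^{2/n})$, so one must verify that the next-order correction in the expansion of $F_{\mathcal{H}\chi^2}(\varepsilon)$ (after inversion) is strictly subleading relative to this surviving contribution, rather than of comparable order. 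This is routine given the uniform $O(\varepsilon)$ control on the Gaussian weight above, but it is the one technical point where the non-isotropic case genuinely needs a little more work than Lemma~1.
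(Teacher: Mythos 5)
Your proof is correct, but it takes a genuinely different --- and in fact more careful --- route than the paper. The paper's own proof is a one-liner: it asserts that in the limit $\varepsilon\to 0$ the generalized distribution of Eq.~\ref{eq:Nchi} ``has a similar CDF'' to the isotropic one, with quantile function $F_{\mathcal{T}\chi^2}^{-1}(\varepsilon)\approx \frac{4n}{e}\varepsilon^{2/n}$, so that Eq.~\ref{eq:limit_kappa} carries over verbatim. You correctly identify the weak point of that route: the moment-matched Gamma approximation $F_{\mathcal{T}\chi^2}$ behaves like $\psi^{\eta}$ near the origin, so taken literally it yields $\kappa_3=1/\eta=2\sum_i\Delta_i^2/(\sum_i\Delta_i)^2$, which equals $2/n$ only when the eigenvalues coincide; the paper's claim is exact only under its standing assumption that the eigenvalue spread is moderate (so $\eta\approx n/2$), and is otherwise an approximation inherited from $F_{\mathcal{T}\chi^2}$. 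Your geometric argument --- computing the Gaussian mass of the small ellipsoid $\{\vec{x}:\vec{x}^T\mathcal{H}\vec{x}\le\varepsilon\}$ directly, obtaining $F_{\mathcal{H}\chi^2}(\varepsilon)=K\varepsilon^{n/2}(1+O(\varepsilon))$ with $K=1/(2^{n/2}\Gamma(n/2+1)\sqrt{\det\mathcal{H}})$ --- works with the \emph{exact} generalized $\chi^2$ law, shows that the $\varepsilon^{n/2}$ tail exponent is universal for any positive-definite $\mathcal{H}$, and lets the spectrum-dependent prefactor cancel in the ratio of Eq.~\ref{eq:kappa_def}. Your handling of the remainders is also sound: since the leading differences are proportional to $(1-2^{2/n})\neq 0$ and the inversion error is $O(\varepsilon^{4/n})=o(\varepsilon^{2/n})$, the limit is exactly $2^{-2/n}$ and $\kappa_3=2/n$. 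In short, you prove an unconditional version of the lemma, whereas the paper proves it only modulo the accuracy of the Feiveson--Delaney approximation; what the paper's route buys is brevity and consistency with the $F_{\mathcal{T}\chi^2}$ machinery used throughout Section~\ref{sec:solving_singlewinner}.
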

\begin{proof}
In the limit $\varepsilon\longrightarrow 0$, the generalized distribution (Eq.\ \ref{eq:Nchi}) has a similar CDF, with an approximated quantile function
\begin{equation}
\displaystyle F_{\mathcal{T}\chi^2}^{-1}\left(\varepsilon \right) \approx \frac{4n}{e}\cdot \varepsilon^{\frac{2}{n}},
\end{equation}
and thus Eq.\ \ref{eq:limit_kappa} holds as is.
\end{proof}
\begin{lm}
For the standard and generalized $\chi^2$ distributions, the normalizing constants 
\begin{equation*}
a^{*}_{\lambda}=F_{\chi^2}^{-1}\left(\frac{1}{\lambda} \right),~~~b^{*}_{\lambda}=\mathrm{inf} \left\{\psi \left|F_{\chi^2}\left( \psi \right)>0\right.\right\}=0 
\end{equation*}
ensure that the limit distribution of Eq.\ \ref{eq:limitMin} is not degenerate.
\end{lm}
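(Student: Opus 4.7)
The plan is to compute the limit in Eq.~\ref{eq:limitMin} directly using the asymptotic quantile expression derived in the preceding two lemmas, and to verify that it yields a proper (non-degenerate) Weibull-minimum CDF with shape parameter $2/n$ on $[0,\infty)$. This will simultaneously exhibit the limit law and confirm that it is not a point mass.

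First I would justify that $b^{*}_{\lambda}=0$ is forced by the problem: both $F_{\chi^2}$ and $F_{\mathcal{T}\chi^2}$ are supported on $[0,\infty)$, so their infimum-of-support is $0$; and since the minimum of $\lambda$ i.i.d.\ non-negative samples is itself non-negative and concentrates at the origin as $\lambda\to\infty$, any translation sequence that yields a non-degenerate limit must equal the infimum of the support, namely $b^{*}_{\lambda}\equiv 0$. Any other constant would push the evaluation point out of the support and trivialize the limit.

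Next, for fixed $\psi>0$, I would invert the asymptotic relation $F_{\chi^2}^{-1}(\varepsilon)\approx(4n/e)\varepsilon^{2/n}$ from Lemma~1 (and its analogue for $F_{\mathcal{T}\chi^2}^{-1}$ from Lemma~2) to recover the companion expansion $F_{\chi^2}(t)\approx(et/(4n))^{n/2}$ as $t\to 0^{+}$. Substituting $a^{*}_{\lambda}=F_{\chi^2}^{-1}(1/\lambda)\approx(4n/e)\lambda^{-2/n}$ then gives
\begin{equation*}
F_{\chi^2}\!\left(a^{*}_{\lambda}\psi\right)\;\approx\;\left(\tfrac{e}{4n}\cdot\tfrac{4n}{e}\lambda^{-2/n}\psi\right)^{n/2}\;=\;\tfrac{\psi^{n/2}}{\lambda},
\end{equation*}
so that, by the standard identity $(1-x_{\lambda}/\lambda)^{\lambda}\to e^{-x}$ whenever $x_{\lambda}\to x$,
\begin{equation*}
1-\bigl(1-F_{\chi^2}(a^{*}_{\lambda}\psi)\bigr)^{\lambda}\;\longrightarrow\;1-\exp\!\bigl(-\psi^{n/2}\bigr).
\end{equation*}
For $\psi\le 0$ the argument $a^{*}_{\lambda}\psi$ lies outside the support and the expression is identically $0$. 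The resulting limit $\mathcal{L}(\psi)$ is therefore the Weibull-minimum CDF of shape $n/2$ on $[0,\infty)$, which is strictly increasing and continuous on its support and hence \emph{not} degenerate. It fits the template of Eq.~\ref{eq:minimaGEVD} with shape $\kappa_{3}=2/n$, as predicted by Lemmas~1 and 2, after the obvious location/scale reparametrization aligning the support with $[0,\infty)$. The generalized $\chi^2$ case is handled by the same computation since, by Lemma~2, its quantile has the same leading-order asymptotics.

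The principal technical obstacle is promoting the heuristic asymptotics to a statement strong enough to survive being raised to the $\lambda$-th power: I need $F_{\chi^2}(a^{*}_{\lambda}\psi)=(\psi^{n/2}/\lambda)\bigl(1+o(1)\bigr)$ with the $o(1)$ uniform on compact $\psi$-intervals. This requires bounding the tail of the power-series expansion of the incomplete-Gamma integral used in Lemma~1 and tracking the Stirling remainder inside $\Gamma(n/2)$, so that the error inside $(1-\cdot)^{\lambda}$ is $o(1/\lambda)$ and contributes only a multiplicative $1+o(1)$ to the exponential limit. For the generalized case the same bookkeeping must be carried through the approximation underlying Eq.~\ref{eq:Nchi}, which is the only place the assumption that the eigenvalue standard deviation is moderate actually enters.
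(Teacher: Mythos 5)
Your proposal is correct and follows essentially the same route as the paper's proof: substitute the small-argument asymptotic $F_{\chi^2}(t)\approx\left(\frac{t}{4}\frac{e}{n}\right)^{n/2}$ at $t=a^{*}_{\lambda}\psi$ with $a^{*}_{\lambda}=F_{\chi^2}^{-1}(1/\lambda)\approx\frac{4n}{e}\lambda^{-2/n}$, and pass to the exponential limit. Your bookkeeping is in fact slightly cleaner: the factors $\frac{e}{4n}$ and $\frac{4n}{e}$ cancel exactly as you note, yielding $1-\exp(-\psi^{n/2})$ in agreement with Eq.~\ref{eq:weibull}, whereas the paper's displayed computation carries a spurious constant $r(n)=\left(\frac{e}{4n}\right)^{n/2}$ into the exponent; your closing remarks on the uniformity of the $o(1)$ error are a welcome addition the paper does not address, though neither affects the non-degeneracy conclusion.
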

\begin{proof}
Given the constants
$$a^{*}_{\lambda}=F_{\chi^2}^{-1}\left(\frac{1}{\lambda} \right)\approx \frac{4n}{e}\left(\frac{1}{\lambda} \right)^{2/n},~~~b^{*}_{\lambda}=0,$$
the limit becomes (using $F_{\chi^2}(\varepsilon)=F_{\mathcal{T}\chi^2}(\varepsilon) = \left(\frac{\varepsilon}{4}\frac{e}{n} \right)^{\frac{n}{2}}$)
\begin{equation}
\begin{array}{l}
\medskip
\displaystyle \lim_{\lambda\longrightarrow \infty} 1 - \left\{ 1-\texttt{CDF}_{\psi}\left[\psi \frac{4n}{e}\left(\frac{1}{\lambda} \right)^{2/n} \right]\right\}^{\lambda} = \\
\medskip
\displaystyle \lim_{\lambda\longrightarrow \infty} 1 - \left[1-r\left(n\right) \left(\frac{\psi^{n/2}}{\lambda} \right) \right]^{\lambda} =\\
\displaystyle 1-\exp\left[-r\left(n\right) \left(\psi \right)^{n/2} \right]
\end{array},
\end{equation}
with $r\left(n\right)=\left(\frac{e}{4n} \right)^{n/2}$. 
Hence, the limit distribution exists and is not degenerate, as claimed.
\end{proof}
\begin{crl}
Since the shape parameter $\kappa_3$ is always positive, the extreme minima of the $\chi^2$-distributions belong to the Weibull domain of attraction. 
The normalized extreme minima, $(\omega-b^{*}_{\lambda})/a^{*}_{\lambda}$, may be represented by a random variable $\tilde{\psi}$, which then
reduces Eq.\ \ref{eq:minimaGEVD} to the following transformed CDF (importantly, the so-called \textbf{tail index} reads $1/\kappa_3 = n/2$):
\end{crl}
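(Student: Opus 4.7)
The plan is to treat this corollary as an immediate consequence of the three lemmas that precede it together with a bookkeeping computation for the GEVD parametrization. First, I would invoke Lemmas 2 and 3 to state that $\kappa_3 = 2/n$ in both the standard and generalized $\chi^2$ cases, so $\kappa_3 > 0$ strictly; by the classification recalled in the bullet list (i.e., Theorem 9.6 of Castillo), this positivity is exactly the criterion placing $F_{\chi^2}$ and $F_{\mathcal{T}\chi^2}$ in the Weibull minimal domain of attraction, and the tail index is then $1/\kappa_3 = n/2$ by definition.

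Next, I would derive the transformed CDF by specializing Eq.~\ref{eq:minimaGEVD} to $\kappa_3 = 2/n$ and fixing the remaining location/scale parameters by the normalization already carried out in Lemma 3. Using $b^{*}_{\lambda}=0$ and $a^{*}_{\lambda}\approx (4n/e)\lambda^{-2/n}$ together with the asymptotic form $F_{\chi^2}(\varepsilon)\approx (e\varepsilon/(4n))^{n/2}$, the product $F_{\chi^2}(a^{*}_{\lambda}\tilde{\psi})$ collapses cleanly to $\tilde{\psi}^{n/2}/\lambda$, and the standard identity $(1-x/\lambda)^{\lambda}\to e^{-x}$ yields
\begin{equation*}
\lim_{\lambda\to\infty}\mathcal{L}_{\lambda}\!\left(a^{*}_{\lambda}\tilde{\psi}\right)\;=\;1-\exp\!\left(-\tilde{\psi}^{\,n/2}\right),\qquad \tilde{\psi}\ge 0.
\end{equation*}
This is exactly the Weibull-for-minima CDF with shape $n/2$.

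To reconcile this reduction with the three-parameter form in Eq.~\ref{eq:minimaGEVD}, I would plug $\kappa_3=2/n$ into the bracket $\bigl[1+\kappa_3(\psi-\kappa_1)/\kappa_2\bigr]^{1/\kappa_3}$ and impose that the lower endpoint $\psi=\kappa_1-\kappa_2/\kappa_3$ coincide with $\tilde{\psi}=0$, forcing $\kappa_1=\kappa_2/\kappa_3=n\kappa_2/2$. A single further choice of $\kappa_2$ (which is precisely what the normalizer $a^{*}_{\lambda}$ was designed to absorb) reduces the bracket to $\tilde{\psi}$ and recovers the displayed Weibull form with tail index $n/2$, proving the claim.

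I do not expect any serious obstacle here; the corollary is essentially a relabeling of the non-degenerate limit already established in the proof of Lemma 3. The only care required is in tracking which of the three GEVD parameters $(\kappa_1,\kappa_2,\kappa_3)$ is pinned by positivity of the shape, which by the normalizing constants, and which by the left-endpoint constraint $\tilde{\psi}\ge 0$, so that the transformed CDF is genuinely a one-parameter Weibull and not a three-parameter family in disguise.
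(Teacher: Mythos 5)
Your proof is correct and follows essentially the same route as the paper, which likewise treats the corollary as an immediate consequence of Lemmas 1--3 and defers the reduction of the three-parameter GEVD of Eq.~\ref{eq:minimaGEVD} to the one-parameter Weibull form to Table 9.1 of Castillo et al.; you simply carry out that location/scale/shape bookkeeping explicitly. Your computation that $F_{\chi^2}\left(a^{*}_{\lambda}\tilde{\psi}\right)$ collapses to $\tilde{\psi}^{n/2}/\lambda$ is in fact slightly cleaner than the paper's own Lemma 3, whose displayed limit retains a factor $r(n)$ that should cancel once the normalizer $a^{*}_{\lambda}\approx(4n/e)\lambda^{-2/n}$ is substituted correctly, so your version matches the stated form $1-\exp\left(-\tilde{\psi}^{n/2}\right)$ exactly.
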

\begin{equation}\label{eq:weibull}
\displaystyle \texttt{CDF}_{\omega}\left(\psi\right)\xrightarrow{\lambda\to\infty} \mathcal{W}\left( \tilde{\psi} \right) = 1-\exp \left(-\tilde{\psi}^{n/2} \right)
\end{equation}

See \cite{Castillo2004} and \cite{Embrechts1997} for an overview on the family of generalized extreme value distributions and on the limit distributions of order statistics.
In particular, see table 9.1 in \cite{Castillo2004}[p. 200] for the relationship between the parameters of the GEVD and the Weibull distribution, which allows the reduction of Eq.\ \ref{eq:minimaGEVD} to Eq.\ \ref{eq:weibull}.
Also, for the exact determination of the tail index value, when assuming certain conditions on the sampling distribution, see Theorem 2.3 in \cite{zhigljavsky2007stochastic}.

\begin{crl}
Under the GEVD approximation for treating large populations, $\lambda\rightarrow \infty$, upon normalizing the variable to $\tilde{\psi}=\left(\omega-b^{*}_{\lambda}\right)/a^{*}_{\lambda}$ and using the tail index result $1/\kappa_3=\frac{n}{2}$, the CDF and PDF forms for the single winning event read:
\end{crl}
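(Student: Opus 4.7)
The plan is simply to assemble the two stated expressions from the preceding results, with essentially no new work beyond recognition and differentiation. For the CDF, I would invoke the previous corollary (Eq.~\ref{eq:weibull}), which already established that as $\lambda\to\infty$ the normalized winning value $\tilde\psi=(\omega-b^{*}_{\lambda})/a^{*}_{\lambda}$ satisfies $\texttt{CDF}_{\omega}(\tilde\psi)\to 1-\exp(-\tilde\psi^{n/2})$; the tail index $n/2$ is inherited from the two preceding lemmas that computed $\kappa_3=2/n$ for both the standard and the generalized $\chi^2$ cases. Since the third lemma fixed $b^{*}_{\lambda}=0$, the normalization reduces to a pure rescaling $\tilde\psi=\omega/a^{*}_{\lambda}$, which keeps the Weibull form clean.

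For the PDF, the plan is to differentiate the Weibull CDF with respect to $\tilde\psi$, yielding the standard Weibull density of shape $n/2$ and unit scale,
\begin{equation*}
\texttt{PDF}_{\omega}(\tilde\psi)=\frac{n}{2}\,\tilde\psi^{\,n/2-1}\exp\!\left(-\tilde\psi^{\,n/2}\right).
\end{equation*}
If instead the corollary is meant to be stated in the unnormalized variable $\omega$, I would substitute $a^{*}_{\lambda}\approx(4n/e)\lambda^{-2/n}$ (from the proof of the third lemma) to write $\tilde\psi^{\,n/2}=\lambda\,r(n)\,\omega^{\,n/2}$ with $r(n)=(e/(4n))^{n/2}$, and include the Jacobian factor $1/a^{*}_{\lambda}$ when converting the density, so that normalization $\int\texttt{PDF}_{\omega}(\omega)\,\textrm{d}\omega=1$ is preserved.

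There is no genuine obstacle here: every ingredient (the Weibull limit form, the value of the tail index, the fact that $b^{*}_{\lambda}=0$, and the asymptotic expression for $a^{*}_{\lambda}$) has already been proven in the three preceding lemmas and the corollary above. The only points requiring a touch of care are bookkeeping: (i) confirming that the offset $b^{*}_{\lambda}$ vanishes so the normalization is a rescaling rather than an affine change; (ii) maintaining the Jacobian when switching between $\tilde\psi$ and $\omega$; and (iii) checking that the differentiation of $-\exp(-\tilde\psi^{\,n/2})$ indeed reproduces the Weibull density factor $(n/2)\tilde\psi^{\,n/2-1}$ with the correct sign. With these three routine checks, the corollary follows immediately, and the resulting closed form is the piece needed to substitute into Eq.~\ref{eq:x_pdf_2} and thereby into Eq.~\ref{eq:Cov0} to proceed with the explicit covariance computation promised in Section~\ref{sec:solving_singlewinner}.
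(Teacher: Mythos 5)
Your proposal is correct and follows essentially the same route as the paper: the corollary is an immediate assembly of the Weibull limit form from the preceding corollary (with tail index $n/2$ from the two lemmas on $\kappa_3$ and $b^{*}_{\lambda}=0$ from the lemma on the normalizing constants), and the PDF is obtained by direct differentiation of that CDF. Your extra remarks on the Jacobian for the unnormalized variable are sound bookkeeping but not needed for the statement as given in the normalized variable $\tilde{\psi}$.
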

\begin{equation}\label{eq:cdf_y_evd}
\begin{array}{l}
\medskip
\displaystyle \texttt{CDF}_{\omega}^{\textrm{GEVD}}\left( \tilde{\psi} \right) = 1 -\exp \left(-\tilde{\psi}^{\frac{n}{2}} \right)\\
\displaystyle \texttt{PDF}_{\omega}^{\textrm{GEVD}}\left( \tilde{\psi} \right) =  \frac{n}{2} \tilde{\psi}^{\frac{n}{2}-1} \exp \left(-\tilde{\psi}^{\frac{n}{2}} \right)
\end{array}
\end{equation}

\subsection{Covariance Derivation}
By setting the \textit{Weibull} form as the characteristic density $\texttt{PDF}_{\omega}$, we may rewrite Eq.\ \ref{eq:Cov0} by utilizing Eq.\ \ref{eq:x_pdf_2} as follows with the normalized $\tilde{J}(\vec{x}) \equiv \left( J(\vec{x})-b^{*}_{\lambda} \right) /a^{*}_{\lambda}$:
\begin{equation}
\begin{array}{l}
\medskip
\displaystyle \mathcal{C}_{ij} = \int_{-\infty}^{+\infty}\cdots \int_{-\infty}^{+\infty} 
x_i x_j \frac{n}{2} \tilde{J}(\vec{x})^{\frac{n}{2}-1} \exp\left[-\tilde{J}(\vec{x})^{\frac{n}{2}}\right] \times \\
\displaystyle  \times \frac{\frac{1}{\sqrt{\left(2\pi \right)^n}}\exp\left(-\frac{1}{2} \vec{x}^T \vec{x} \right)}{\frac{\Upsilon^{\eta}}{\Gamma\left(\eta\right)} J(\vec{x})^{\eta-1}\exp\left(-\Upsilon J(\vec{x}) \right)} \textrm{d}x_1\textrm{d}x_2\cdots \textrm{d}x_n.
\end{array}
\end{equation}
$J$ is assumed here to satisfy $J\left( \vec{x}\right) = \vec{x}^T \cdot \mathcal{H} \cdot \vec{x}$,
and must be normalized only for the $\texttt{PDF}_{\omega}$ term by means of $a^{*}_{\lambda}$ alone since $b^{*}_{\lambda}=0$:
\begin{equation} \label{eq:Cij}
\boxed{
\begin{array}{l}
\medskip
\displaystyle \mathcal{C}_{ij} = \int_{-\infty}^{+\infty}\cdots \int_{-\infty}^{+\infty} 
N_{\mathcal{C}}  x_i x_j \left(\vec{x}^T \mathcal{H} \vec{x} \right)^{\frac{n}{2}-\eta} \times \\
\displaystyle \times \exp\left[\Upsilon\vec{x}^T \mathcal{H} \vec{x} -\left(\frac{\vec{x}^T \mathcal{H} \vec{x}}{a^{*}_{\lambda}} \right)^{\frac{n}{2}} - \frac{1}{2}\vec{x}^T \vec{x}\right] \textrm{d}x_1\textrm{d}x_2\cdots \textrm{d}x_n.
\end{array} }
\end{equation}
with a normalizing constant $N_{\mathcal{C}}=\frac{n\Gamma(\eta)}{2\Upsilon^{\eta} \left(a^{*}_{\lambda}\right)^{\frac{n}{2}-1}\sqrt{\left(2\pi \right)^n}}$.

For the isotropic case, $\mathcal{H}=h_0 \mathbf{I}$, the integration is straightforward ($\eta=\frac{n}{2}$, $\Upsilon=\frac{1}{2h_0}$) -- the attained covariance is proportional to the inverse Hessian, multiplied by an explicit factor:
\begin{equation}\label{eq:isoResult}
\displaystyle \mathcal{C}^{(\mathcal{H}=h_0\mathbf{I})} =  \frac{\Gamma(\frac{n}{2})\cdot \Gamma\left(1+\frac{2}{n} \right) \cdot c\left( n \right) \cdot a^{*}_{\lambda}}{2\pi^{n/2}}  \cdot \mathcal{H}^{-1}
\end{equation}
wherein
\begin{equation}
\displaystyle c\left( n \right)=\left\{
\begin{array}{ll} \frac{\pi^m}{m!} & n=2m \\ \frac{2^{m+1}\pi^m}{1 \cdot 3\cdot 5 \cdots \left(2m+1\right)} & n=2m+1
\end{array}\right.
\end{equation}

For the general case of any positive-definite Hessian $\mathcal{H}$, the integral in Eq.\ \ref{eq:Cij} has an unknown closed form.
We were able, nevertheless, to numerically corroborate it for $n=3$.
We note that this form of the covariance also commutes with the Hessian, in line with the theorem discussed above.

\section{Simulation Study}\label{sec:simulation}
We implemented our model into a numerical procedure in order to compare the statistical measures to the analytical calculations in practice, adhering to Algorithm 1, $\mathcal{C}^{\texttt{stat}}$, 
statistically sampled as described therein.
Numerical validation is provided here for two aspects of our theoretical work: Theorem \ref{thm:commuting} and the analytic approximation for the covariance matrix.

\subsection{$\mathcal{C}^{\texttt{stat}}$ \textit{versus} $\mathcal{H}$}
We generated a large set of random positive-definite matrices at various dimensions $\left\{n_{\jmath} \right\}$ with a spectrum of condition numbers.
For each trial $\jmath$, the numerical procedure generated a random symmetric matrix $A_{\jmath}$, diagonalized it into a set of orthonormal eigenvectors $U_{\jmath}$, drew $n_{\jmath}$ random positive numbers in a diagonal matrix $D_{\jmath}$, and set $\mathcal{H}_{\jmath}=U_{\jmath} A_{\jmath} U_{\jmath}^{-1}$. 
We then applied Algorithm 1 by considering $\left\{\mathcal{H}_{\jmath} \right\}$ as landscape Hessians. 

The resultant covariance matrices, $\left\{\ \mathcal{C}^{\texttt{stat}}_{\jmath} \right\}$, were diagonalized and compared to the Hessian matrices and their eigendecomposition -- which always matched.
In practice, it was evident that the two matrices always commute (applying the commutator operator yields a zero matrix to a practical precision considering the \textit{max norm}), 
\begin{equation}
\displaystyle \forall \jmath ~~~ \| \mathcal{H}_{\jmath} \mathcal{C}^{\texttt{stat}}_{\jmath} - \mathcal{C}^{\texttt{stat}}_{\jmath} \mathcal{H}_{\jmath}\|_{\max} < 10^{-1},
\end{equation}
as claimed. However, the covariance matrices were \textbf{not} the inverse forms of the Hessian matrices.

\subsection{Analytic Approximation for $\mathcal{C}^{\texttt{stat}}$}
Here, we corroborated the analytic approximation for the covariance matrix.
To this end, we considered four quadratic basins of attraction at various search-space dimensions:
\begin{compactenum}[(H-1)]
\item $n=3$, $\mathcal{H}_1=\left[\sqrt{2}/2~ 0.25~ 0.1;~ 0.25~ 1~ 0;~ 0.1~ 0~ \sqrt{2} \right]$
\item $n=10$, $\mathcal{H}_2=\textrm{diag}\left[1.0,1.5,\ldots,5.5 \right]$
\item $n=30$, $\mathcal{H}_3=\textrm{diag}\left[\vec{I}^{10},2\cdot\vec{I}^{10},3\cdot\vec{I}^{10} \right]$
\item $n=100$, $\mathcal{H}_4= 2.0 \cdot \mathbf{I}^{100 \times 100}$
\end{compactenum}
\subsubsection{Validating the Approximated $\chi^2$ Density $f_{\mathcal{T}\chi^2}$}
Figure \ref{table:approxChi2} depicts the approximated density functions of the generalized $\chi^2$ distribution, $f_{\mathcal{T}\chi^2}$ (Eq.\ \ref{eq:Nchi_density}) for the four Hessian forms (H-1)-(H-4), which evidently constitute sound approximations.

\subsubsection{Validating the Winners' Densities $\texttt{PDF}_{\omega}$ and $\texttt{PDF}_{\omega}^{\textrm{GEVD}}$}
Figures \ref{table:winners} and \ref{table:H2winners} provide validation for the winners' density, which was exactly described by $\texttt{PDF}_{\omega}$ in Eq.\ \ref{eq:y_pdf}, and was later approximated by $\texttt{PDF}_{\omega}^{\textrm{GEVD}}$ in Eq.\ \ref{eq:cdf_y_evd} for large $\lambda$.
Interestingly, $\texttt{PDF}_{\omega}$, which is realized here by the approximated generalized $\chi^2$ distribution $F_{\mathcal{T}\chi^2}$, exhibits decreased accuracy on $\mathcal{H}_1$, $\mathcal{H}_2$ and $\mathcal{H}_3$. Evidently, it is highly sensitive to the approximation error of $F_{\mathcal{T}\chi^2}$, which is amplified by the exponent $\lambda$. 
At the same time,  $\texttt{PDF}_{\omega}^{\textrm{GEVD}}$ exhibits decreased accuracy on $\mathcal{H}_2$, $\mathcal{H}_3$ and $\mathcal{H}_4$, due to its sensitivity to the population size $\lambda$. 
Indeed, improvements for this approximation were evident when $\lambda$ was increased (see additional settings on Figure \ref{table:H2winners}).

\subsubsection{Validating the Approximated Integral}
Finally, we compared $\mathcal{C}^{\texttt{stat}}$ 
to the obtained analytical approximations.
For the isotropic case, the result of Eq.\ \ref{eq:isoResult} has been successfully corroborated for a range of search-space dimensions $n$. For instance, $\mathcal{C}^{\texttt{stat}}$ for the $100$-dimensional case (H-4) was constructed with $\lambda=5000$ and over $500,000$ iterations to obtain a diagonal with an expected value $0.5617 \pm 0.0012$. Eq.\ \ref{eq:isoResult} obtained a value of $0.5680$.

For the general case, we considered the $3$-dimensional case (H-1). $\mathcal{C}^{\texttt{stat}}$ was constructed with $\lambda=20$ and over $10,000$ iterations, to be presented side-by-side with the numerical integration of Eq.\ \ref{eq:Cij} in Table \ref{table:H1}. Additionally, their explicit eigenvectors are provided therein.
\begin{figure}
\centering Simulation Study: The distribution of $J(\vec{x})$ over a single iteration\\
\begin{tabular}{l | l}
\hline
$\mathcal{H}_1~(n=3):$ & $\mathcal{H}_2~(n=10):$ \\
\hline
\includegraphics[width=0.45\columnwidth]{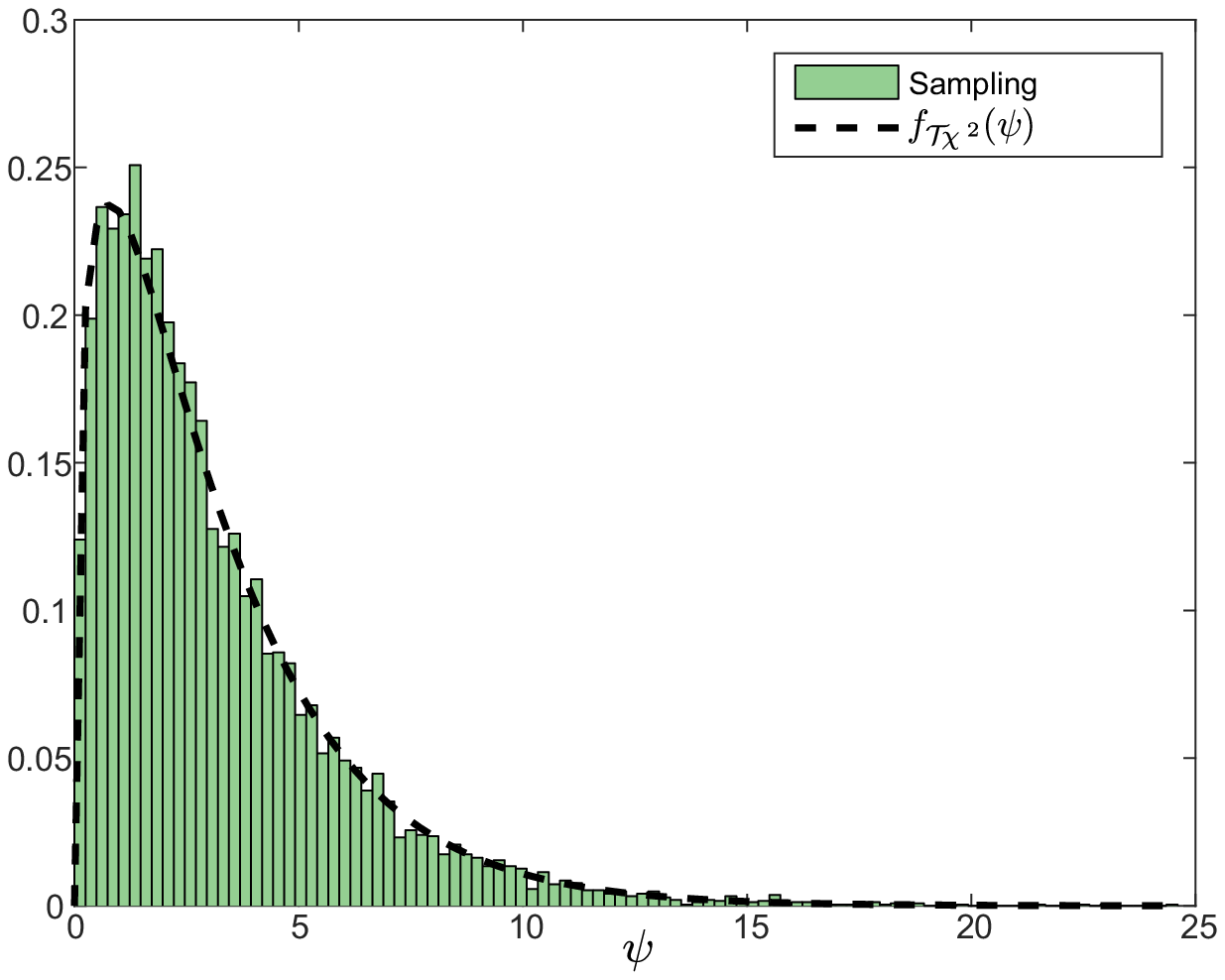} & \includegraphics[width=0.45\columnwidth]{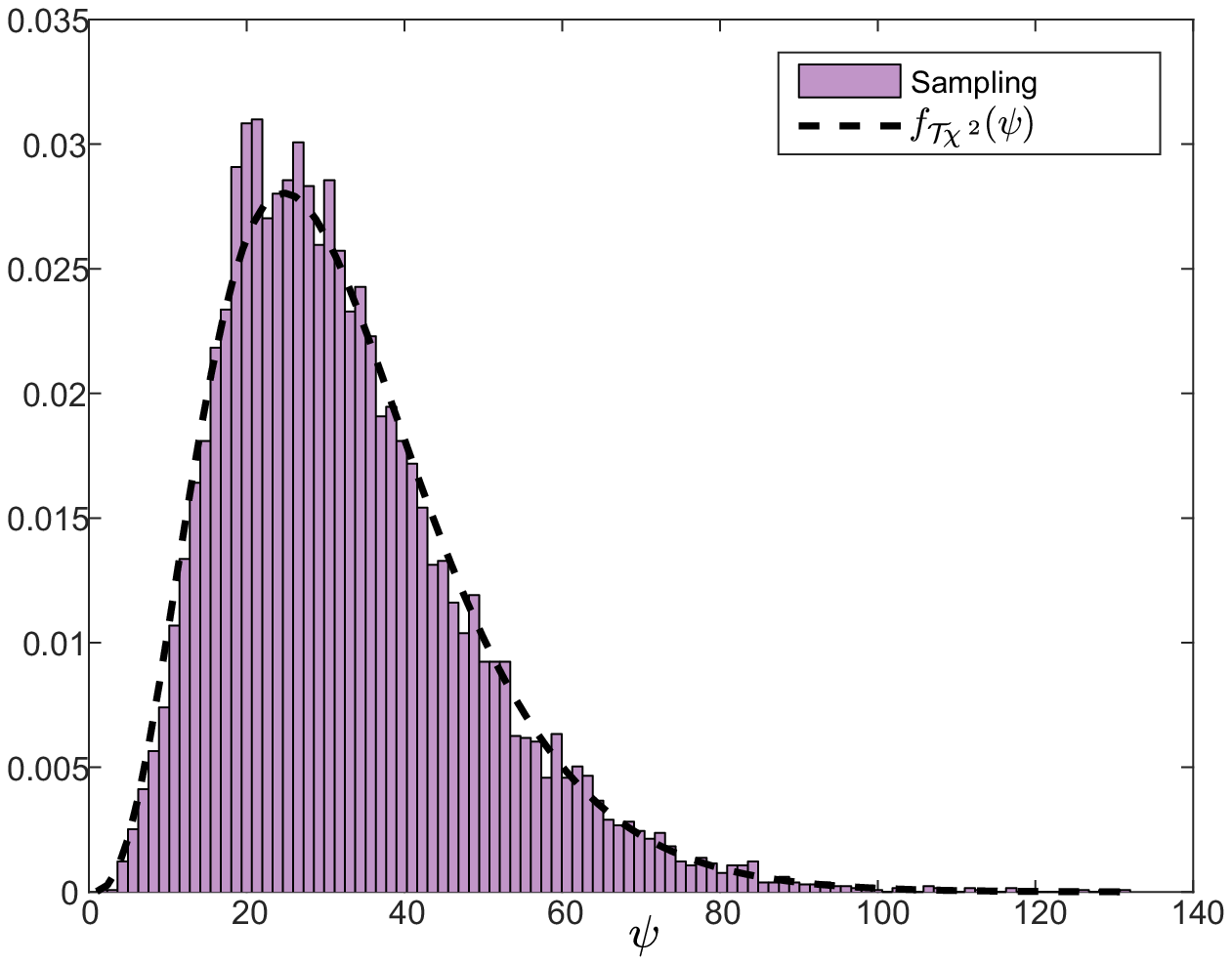}\\
\hline
\hline
$\mathcal{H}_3~(n=30):$ & $\mathcal{H}_4~(n=100):$\\
\hline
\includegraphics[width=0.45\columnwidth]{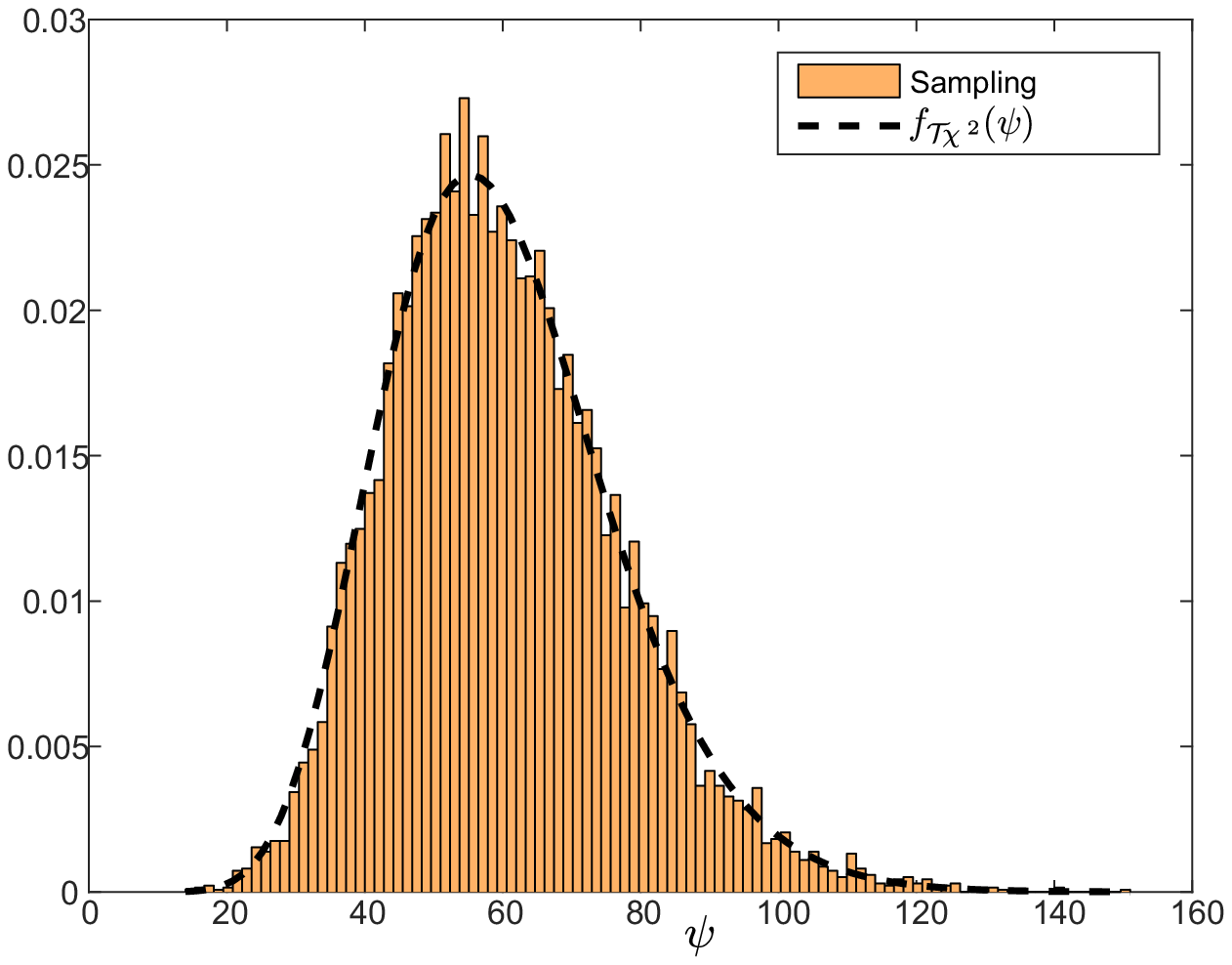} & \includegraphics[width=0.45\columnwidth]{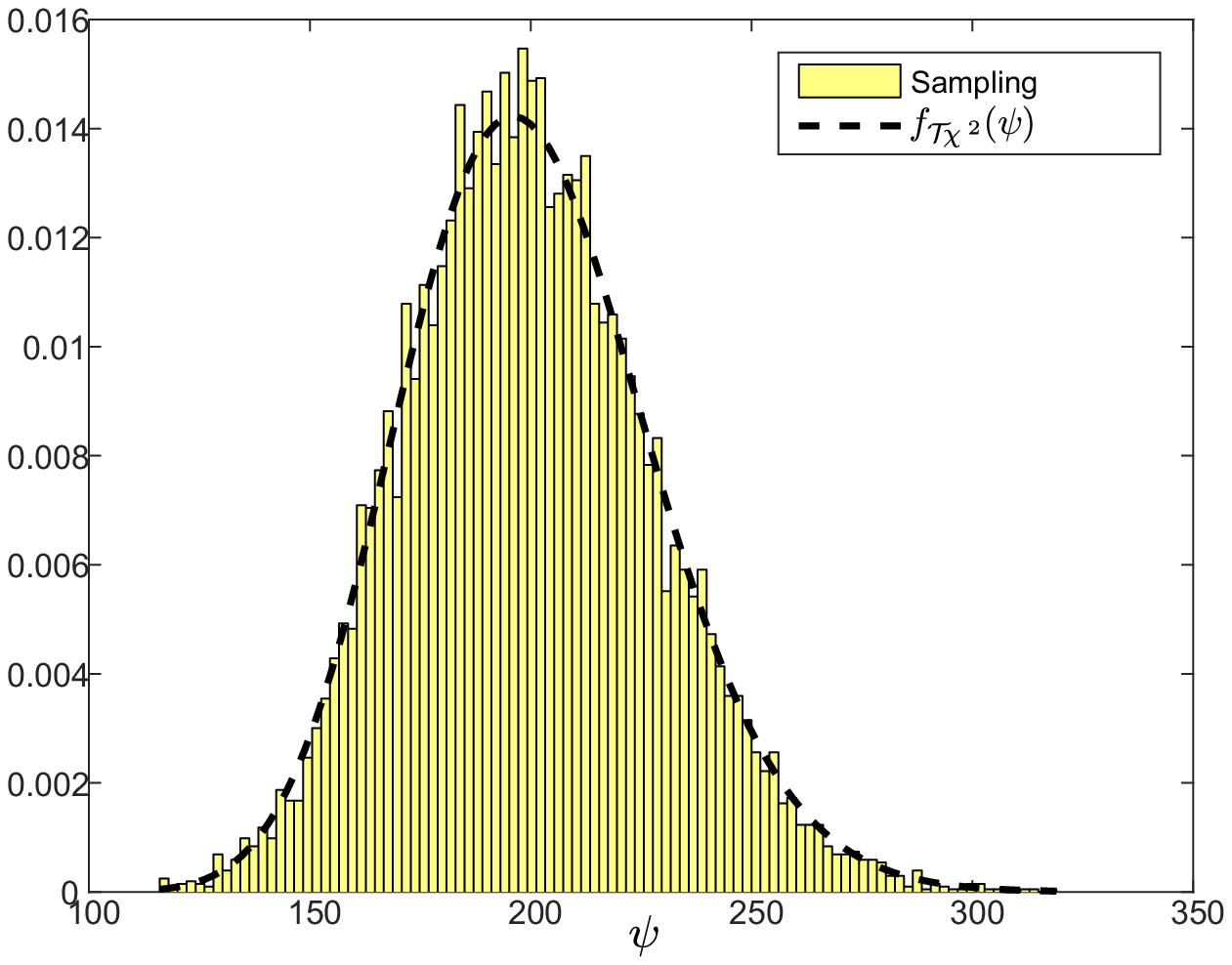}\\
\hline
\hline
\end{tabular}
 \caption{Statistical demonstration of the probability distribution that describe the generation of individuals (offspring) over quadratic basins, considering $\left\{\mathcal{H}_1,~\mathcal{H}_2,~\mathcal{H}_3,~\mathcal{H}_4\right\}$ at $n=\left\{3,~10,~30,~100\right\}$, respectively with the exact forms listed in (H-1)-(H-4).
The distribution of $J(\vec{x})$ is depicted over a single sample of $10,000$ individuals: statistical histograms in bars, versus the analytically derived approximation to the PDF (dashed curve) according to $f_{\mathcal{T}\chi^2}$ in Eq.\ \ref{eq:Nchi_density}. 
\label{table:approxChi2}}
\end{figure}
\begin{table}
\begin{center}
\begin{small}
\begin{tabular}{| l  l |}
\hline
& \\
$ \mathcal{C}^{\textrm{Algo1}}= 
\begin{pmatrix}
0.1552  & -0.0362 &  -0.0096 \\
-0.0362 & 0.1125  &  0.0023\\
-0.0096  & 0.0023 & 0.0766
\end{pmatrix}   $
& 
$ \mathcal{U}^{\textrm{Algo1}}= 
\begin{pmatrix}
0.1606  & -0.4737 & 0.8659 \\
0.0953  & -0.8658  & -0.4913\\
0.9824 &  0.1614 & -0.0939
\end{pmatrix}   $ 
\\
& \\
$ \mathcal{C}^{Eq\ref{eq:Cij}}= 
\begin{pmatrix}
0.1631 &  -0.0369 &  -0.0107 \\
-0.0369  &  0.1188  &  0.0024\\
-0.0107  &  0.0024  &  0.0810
\end{pmatrix} $
&
$ \mathcal{U}^{Eq\ref{eq:Cij}}= 
\begin{pmatrix}
 0.1692 &  -0.4680  &  0.8674 \\
0.0981 & -0.8677 & -0.4873 \\
0.9807  &  0.1675 &  -0.1010
\end{pmatrix} $
\\
& \\
\hline
& \\
$ \mathcal{H}_1 \cdot \mathcal{C}^{Eq\ref{eq:Cij}}= 
\begin{pmatrix}
0.1050  &  0.0038  &  0.0011 \\
0.0039  &  0.1096 &  -0.0003\\
0.0012 &  -0.0003  &  0.1135
\end{pmatrix} $
&
$ \mathcal{U}^{\mathcal{H}_1}= 
\begin{pmatrix}
 0.1692 &  -0.4680  &  0.8674 \\
0.0981 & -0.8677 & -0.4873 \\
0.9807  &  0.1675 &  -0.1010
\end{pmatrix} $
\\
\hline
\end{tabular}
\end{small}
\caption{Numerical integration of Eq.\ \ref{eq:Cij} applied to case (H-1). 
 The statistically-constructed covariance matrix $\mathcal{C}^{\textrm{Algo1}}$ and its corresponding eigenvectors $\mathcal{U}^{\textrm{Algo1}}$ [TOP] \textit{versus} the analytic covariance $ \mathcal{C}^{Eq\ref{eq:Cij}}$ obtained by numerical integration of Eq.\ \ref{eq:Cij} and its eigenvectors $\mathcal{U}^{Eq\ref{eq:Cij}}$ [CENTER]. 
The multiplication $\mathcal{H}_1 \cdot \mathcal{C}^{\textrm{Algo1}}$ is explicitly presented, along with the eigenvectors of the Hessian matrix, $\mathcal{U}^{\mathcal{H}_1}$ [BOTTOM]. \label{table:H1}}
\end{center}
\end{table}

\begin{figure}
$\mathcal{H}_1~(n=3)~\left(\lambda=20, ~ N_{\texttt{iter}}=10^5 \right)$\\
\includegraphics[width=0.49\columnwidth]{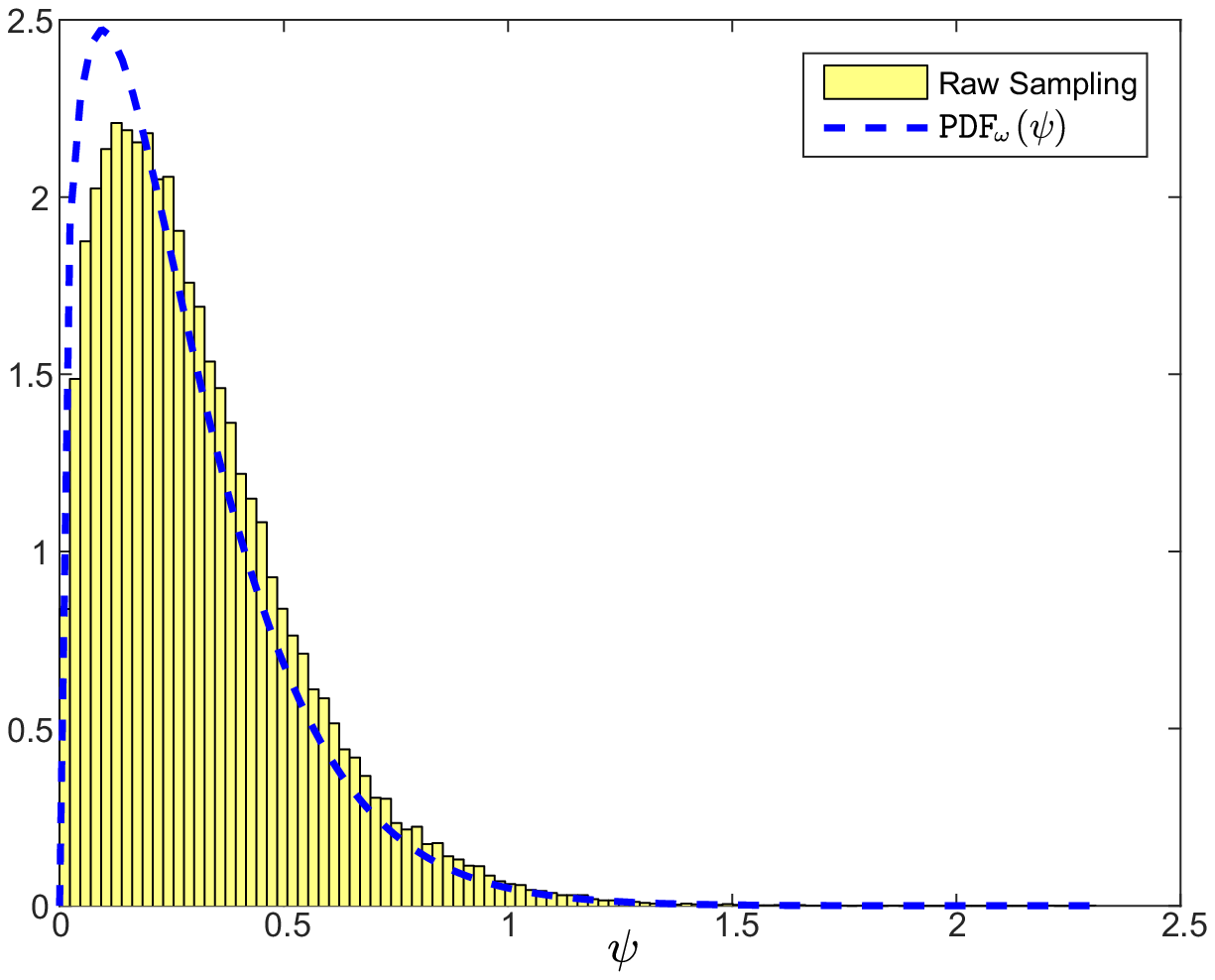}  \includegraphics[width=0.47\columnwidth]{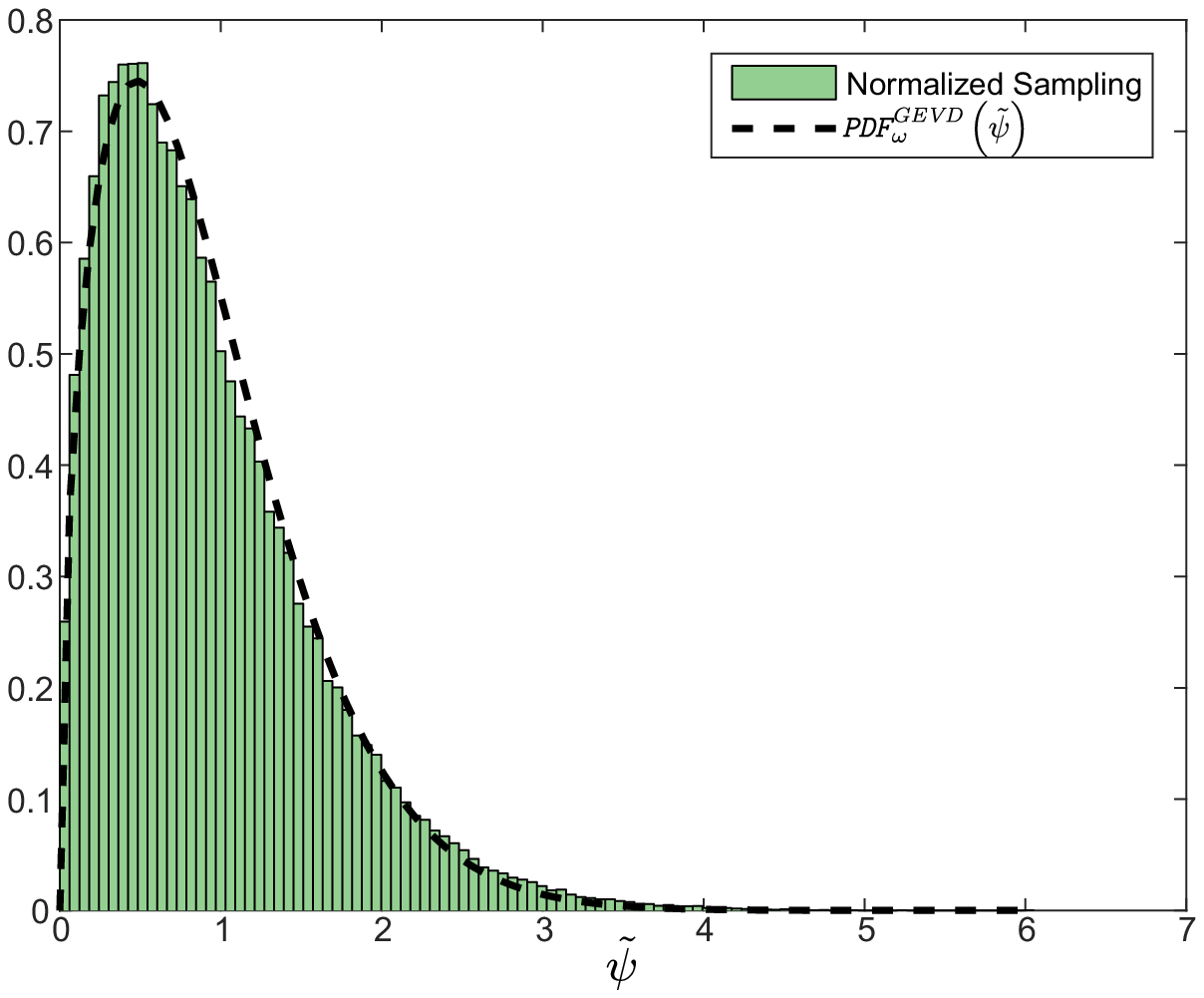} \\
$\mathcal{H}_3~(n=30)~\left(\lambda=1000, ~ N_{\texttt{iter}}=2\cdot 10^5 \right)$\\
\includegraphics[width=0.49\columnwidth]{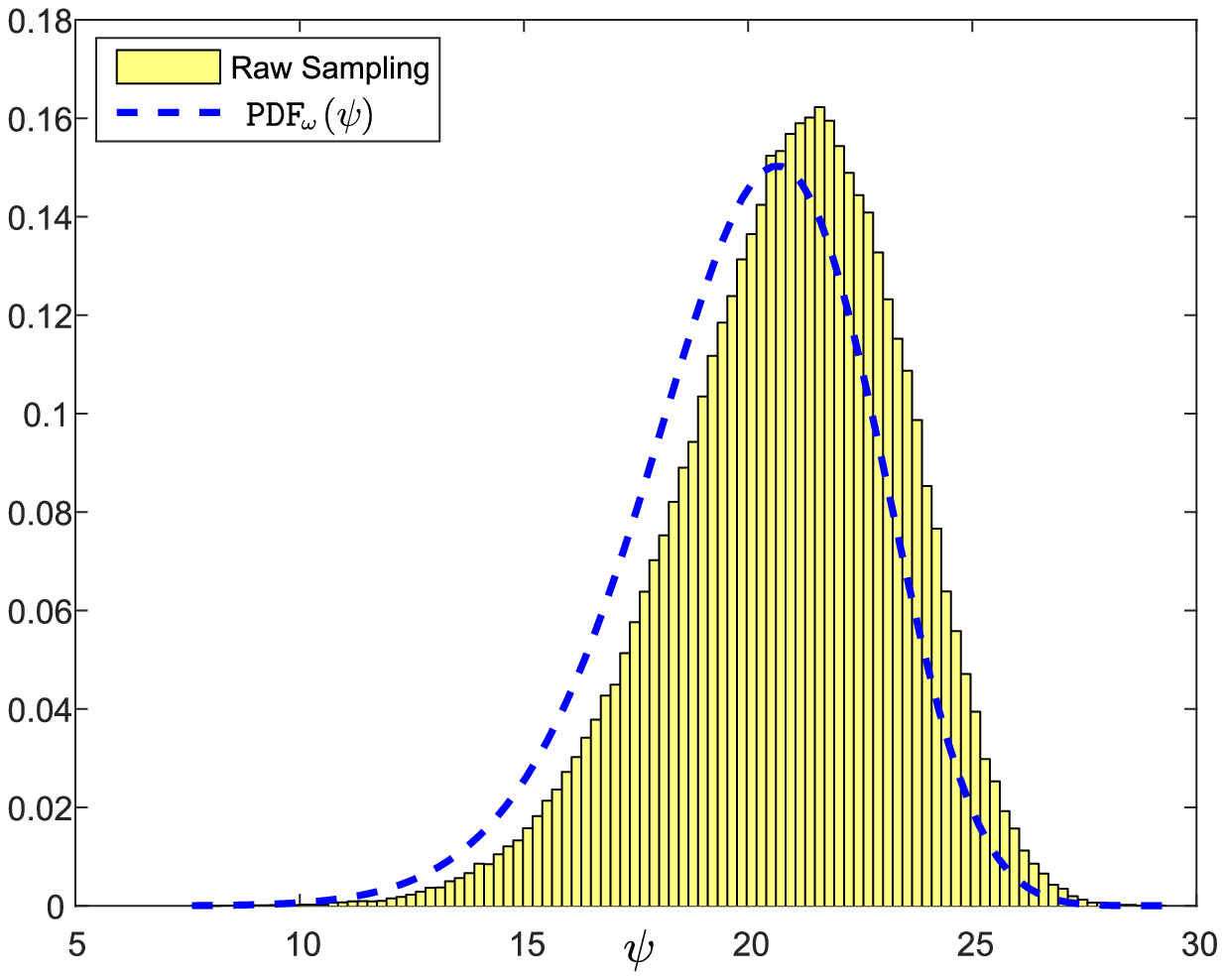}  \includegraphics[width=0.47\columnwidth]{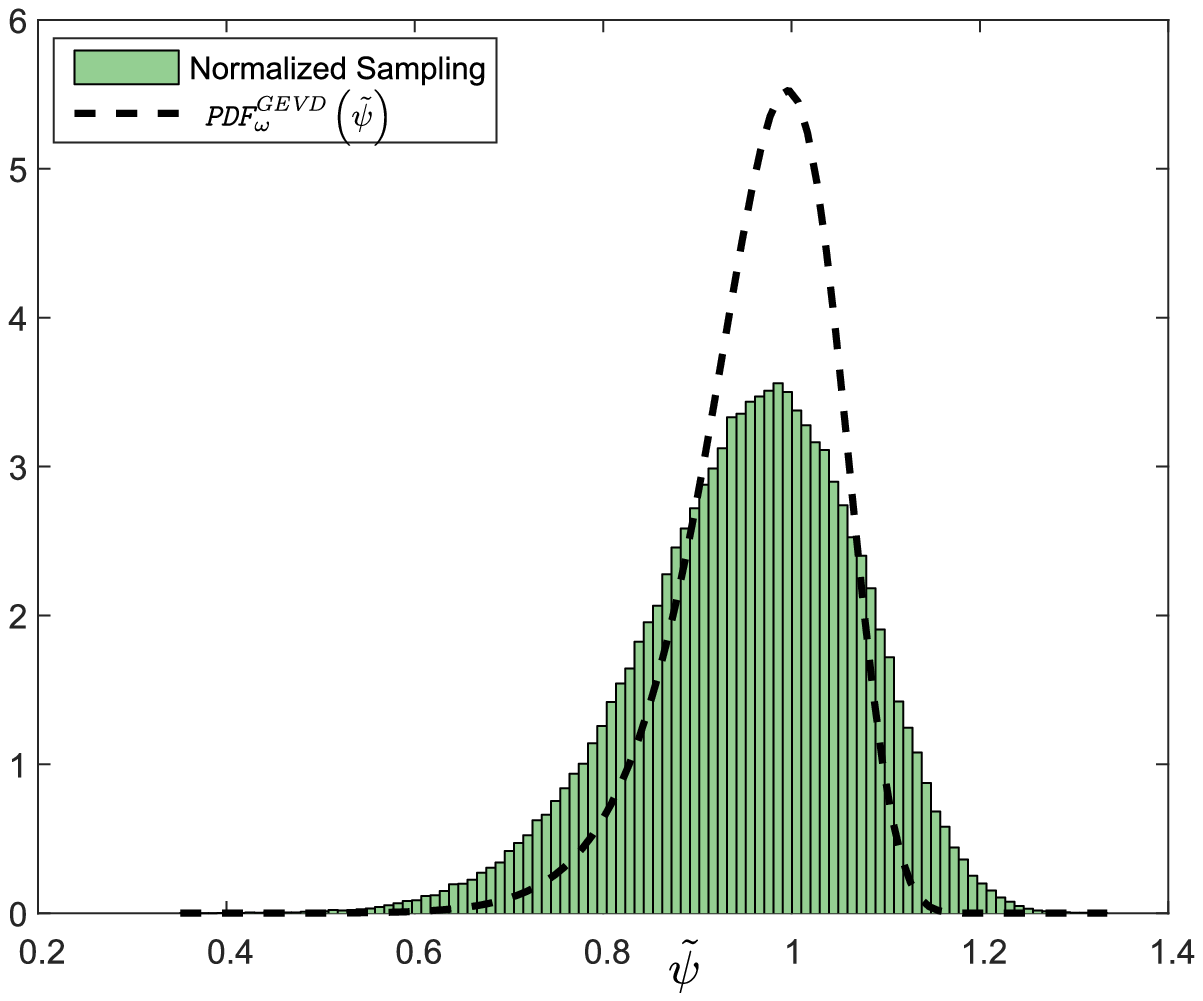} \\
$\mathcal{H}_4~(n=100)~ \left(\lambda=5000, ~ N_{\texttt{iter}}=5\cdot10^5 \right)$ \\
\includegraphics[width=0.49\columnwidth]{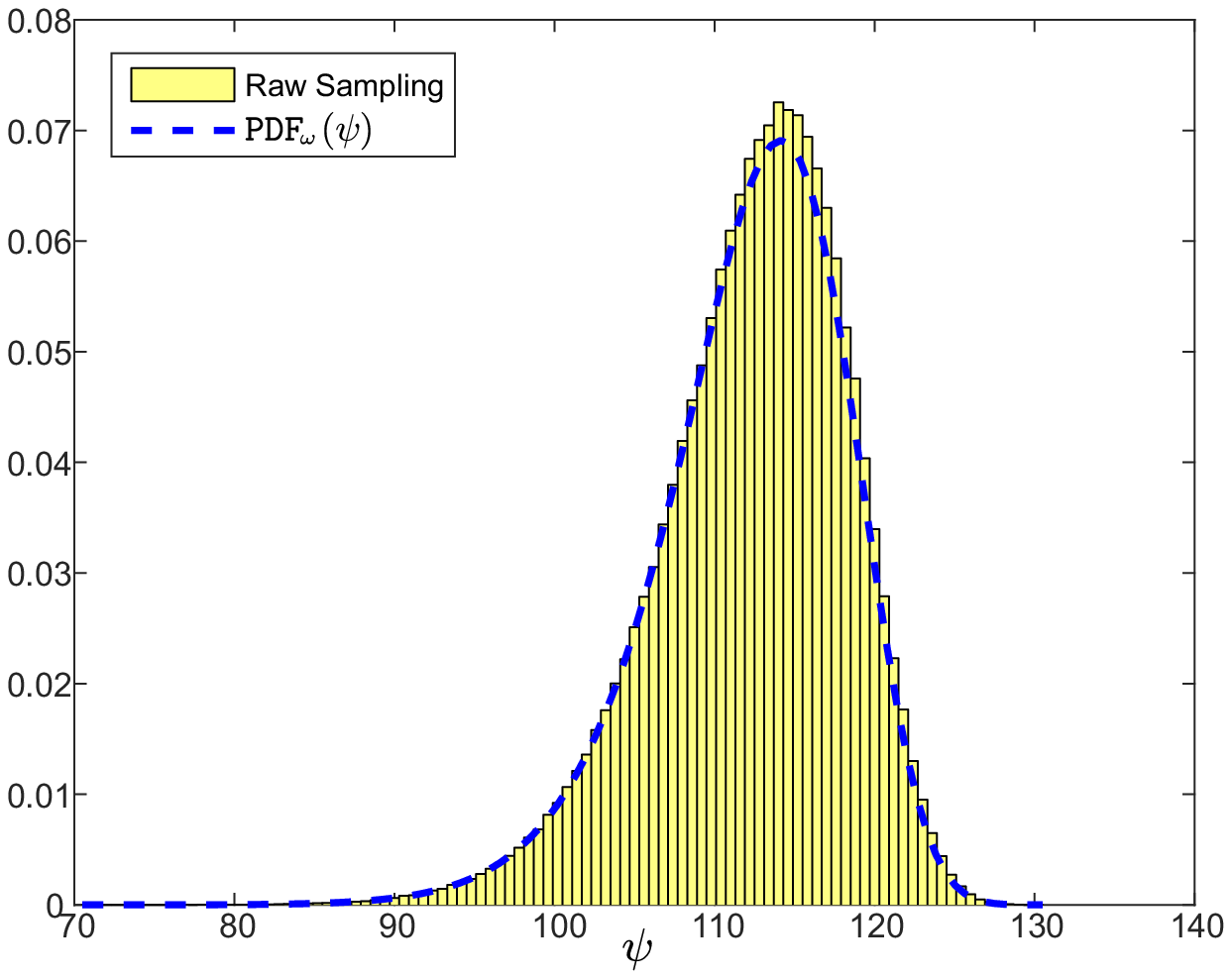} \includegraphics[width=0.47\columnwidth]{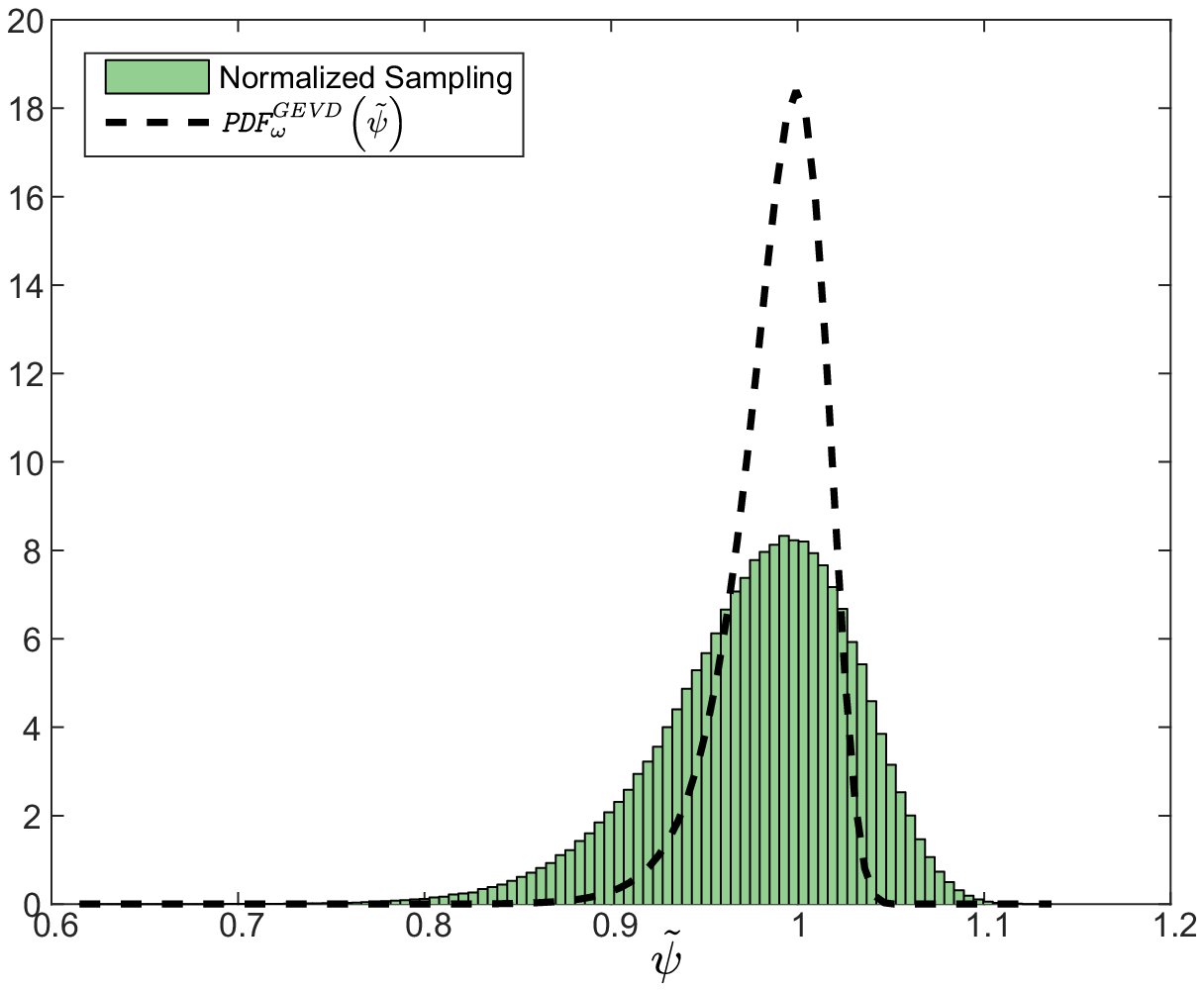} \\
\caption{Statistical demonstration of the derived probability distributions that describe the winning events of quadratic basin minimization, considering $\left\{\mathcal{H}_1,~\mathcal{H}_3,~\mathcal{H}_4\right\}$ at $n=\left\{3,~30,~100\right\}$, respectively with the exact forms listed in (H-1),(H-3),(H-4).
The winners' distributions amongst a population of $\lambda$ individuals are depicted over $N_{\texttt{iter}}$ iterations:  
[LEFT column] Statistical histograms of raw samples (bars), versus the analytical PDF (dashed curve) according to $\texttt{PDF}_{\omega}$ in Eq.\ \ref{eq:y_pdf}; [RIGHT column] Statistical histograms of normalized samples (bars), versus the analytically derived GEVD approximation to the PDF (dashed curve) according to $\texttt{PDF}_{\omega}^{\textrm{GEVD}}$ in Eq.\ \ref{eq:cdf_y_evd}.
\label{table:winners}}
\end{figure}

\begin{figure}
$\mathcal{H}_2~(n=10)~\left(\lambda=1000, ~ N_{\texttt{iter}}=10^5 \right)$\\
\includegraphics[width=0.49\columnwidth]{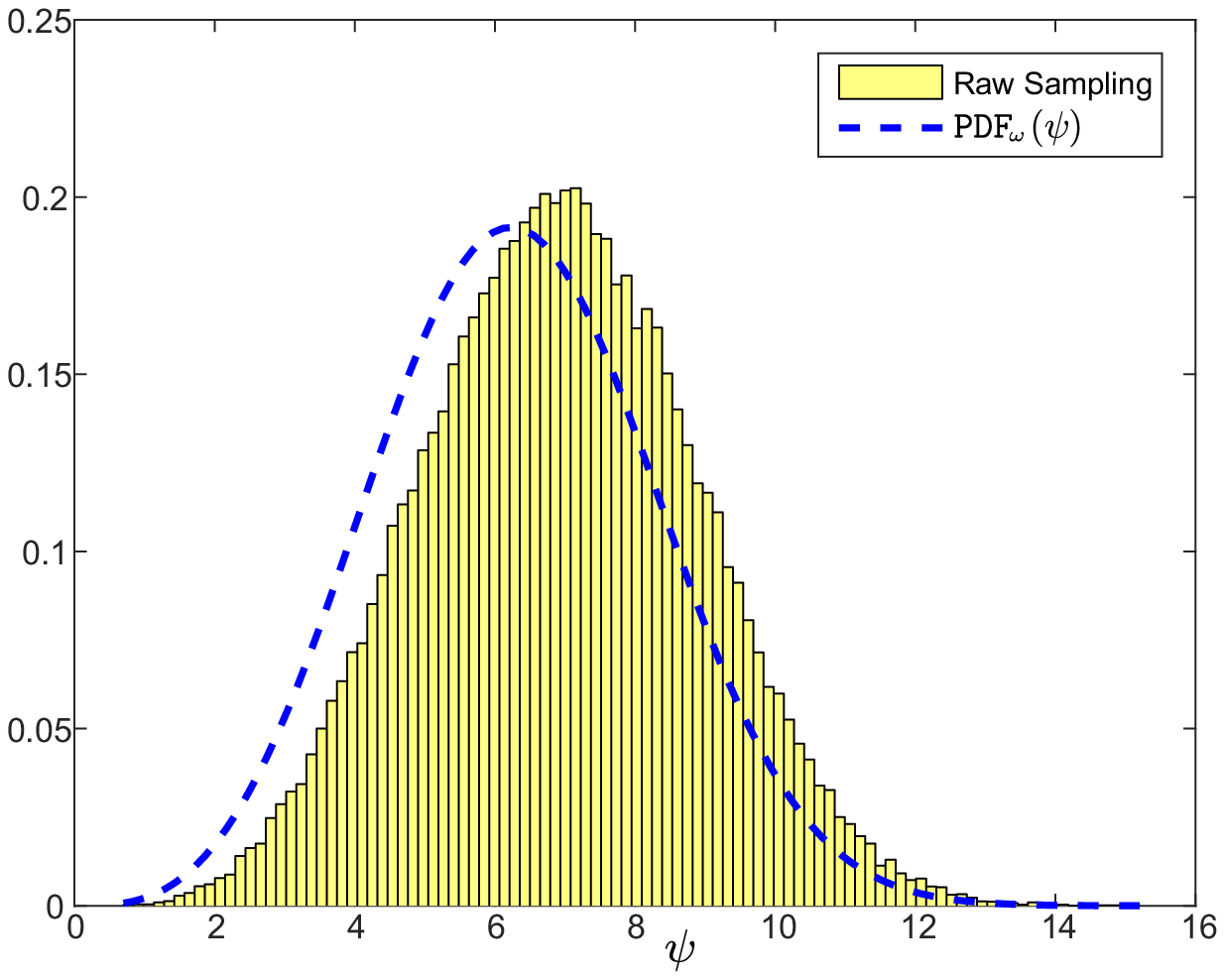}  \includegraphics[width=0.47\columnwidth]{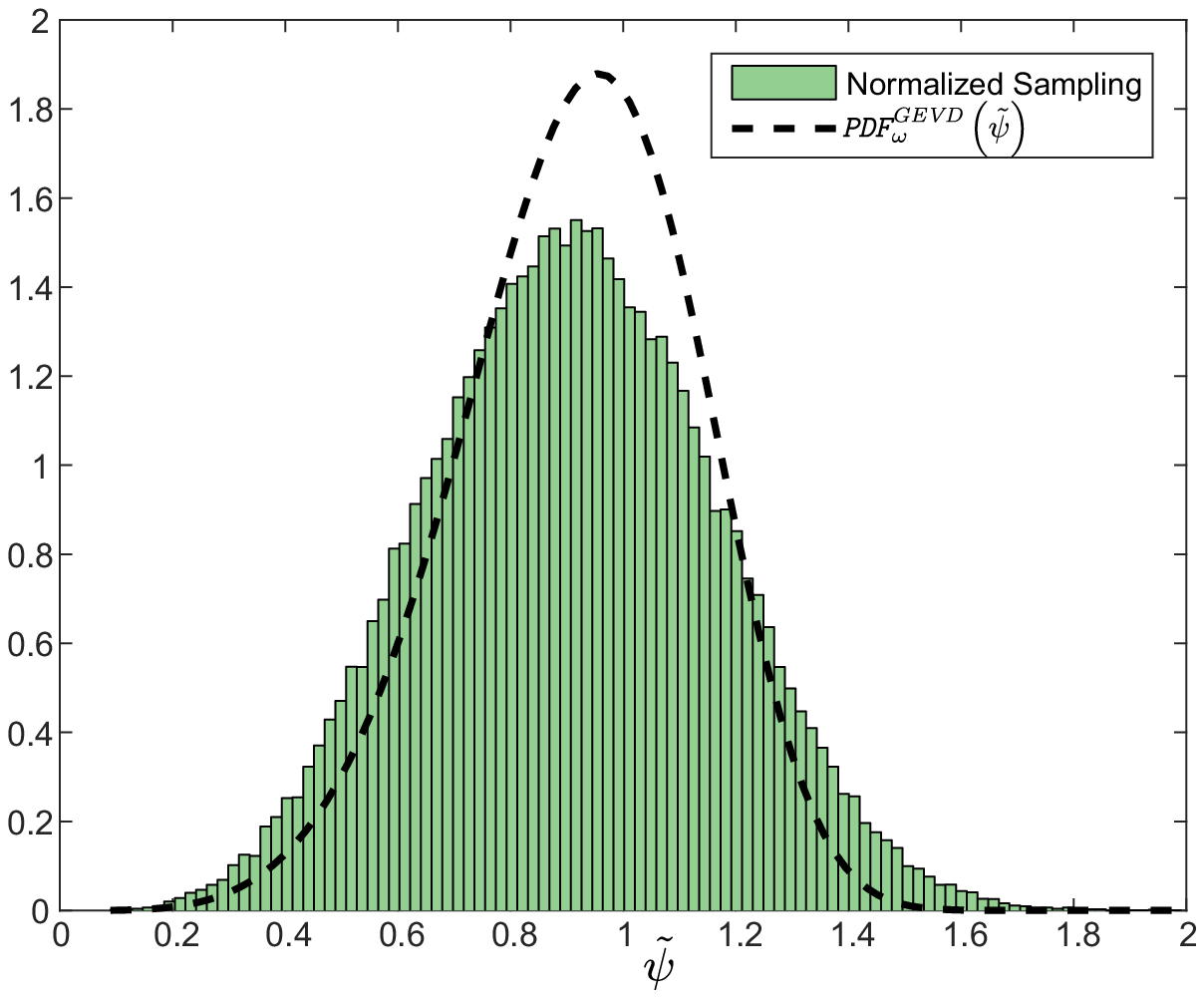} \\
$\mathcal{H}_2~(n=10)~\textrm{[LEFT]:}~ \left(\lambda=10000, ~ N_{\texttt{iter}}=10^6\right)~\textrm{[RIGHT]:}~ \left(\lambda=20000, ~ N_{\texttt{iter}}=5\cdot 10^6\right)$\\
\includegraphics[width=0.5\columnwidth]{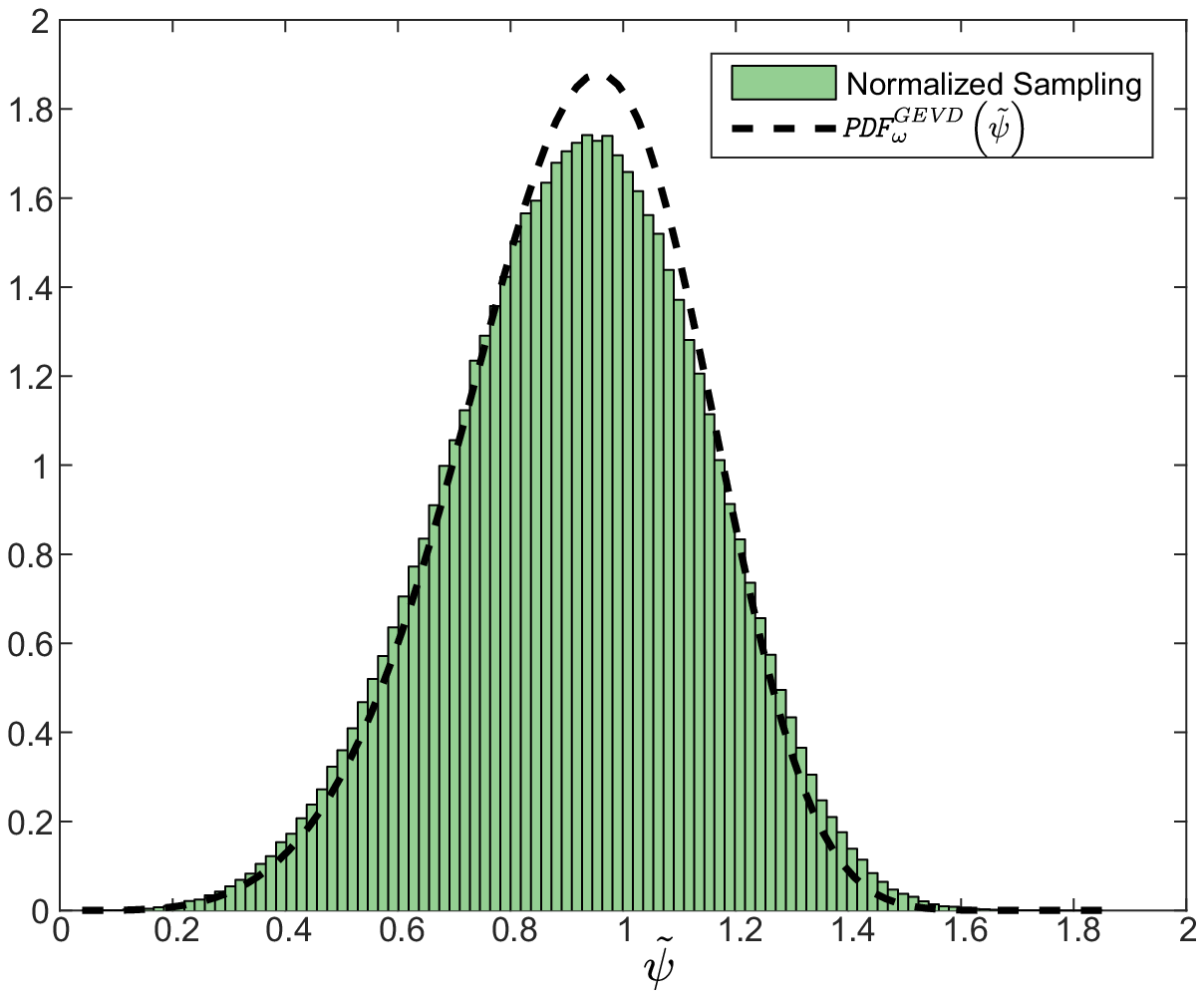} \includegraphics[width=0.5\columnwidth]{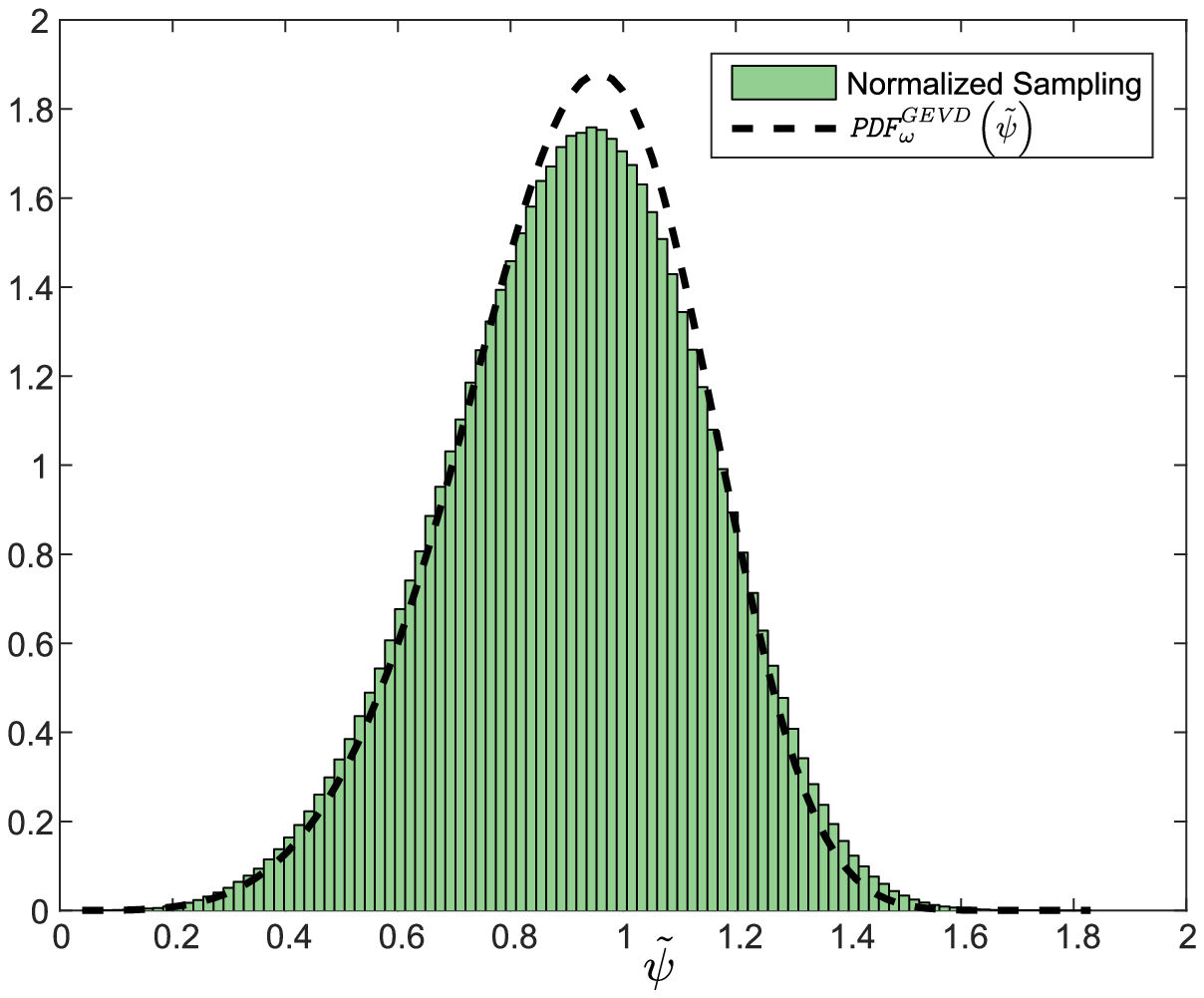} 
 \caption{Statistical demonstration of the distributions describing the winning events of quadratic basin minimization, considering $\mathcal{H}_2$ [TOP], with two additional GEVD approximations for $\mathcal{H}_2$ featuring different settings [BOTTOM]. \label{table:H2winners}}
\end{figure}

\section{Conclusion}\label{sec:discussions}	
Our analytical work modeled passive ES learning, that is, no step-size adaptation nor covariance matrix adaptation were utilized when constructing a covariance matrix from winning decision vectors. 
We proved that the statistically-constructed covariance \textit{commutes} with the landscape Hessian about the optimum when a quadratic basin is assumed. The implication of this result is the enhanced capacity of ESs to identify \textit{sensitive optimization directions} by extracting this information from the learned covariance matrix.
We then derived an analytical approximation for the covariance matrix, based on two presumptions -- (i) the generalized $\chi^2$ density function was approximated, assuming moderate standard deviation of the eigenvalues, and (ii) the winners' distribution was shown to follow the \textit{Weibull} distribution with a calculated \textit{tail index} when the population size $\lambda$ is large, adhering to the limit distributions of order statistics.
Our results were then numerically validated at multiple levels, where the accuracy of the approximations was discussed.

\bibliographystyle{abbrv}

\end{document}